\documentclass{article}

\newif\ifarxiv       \arxivtrue
\newif\ifcameraready \camerareadyfalse

\usepackage{iclr2027_conference,times}

\usepackage{microtype}
\usepackage{graphicx}
\usepackage{subcaption}
\usepackage{array}    %
\usepackage{booktabs} %
\usepackage{dsfont}

\usepackage{hyperref}

\newcommand{\init}{\ensuremath{\mathtt{init}}}
\newcommand{\query}{\ensuremath{\mathtt{query}}}
\newcommand{\step}{\ensuremath{\mathtt{step}}}
\newcommand{\reachable}{\ensuremath{\mathtt{reachable}}}
\newcommand{\reachableWith}{\ensuremath{\mathtt{reachableWith}}}
\newcommand{\unif}{\ensuremath{\mathrm{unif}}}
\newcommand{\Good}{\ensuremath{\mathrm{Good}}}
\newcommand{\Worsen}{\ensuremath{\mathrm{Worsen}}}
\newcommand{\BigWorsen}{\ensuremath{\mathrm{BigWorsen}}}
\newcommand{\Jump}{\ensuremath{\mathrm{Jump}}}

\usepackage{longtable}

\ifarxiv\iclrfinalcopy\fi
\ifcameraready\iclrfinalcopy\fi

\usepackage{amsmath}
\usepackage{amssymb}
\usepackage{mathtools}
\usepackage{amsthm}

\DeclareMathOperator{\Unif}{Unif}

\usepackage[capitalize,noabbrev]{cleveref}
\makeatletter
\g@addto@macro\appendix{%
  \crefalias{section}{appendix}%
  \crefalias{subsection}{subappendix}%
  \crefalias{subsubsection}{subsubappendix}}
\makeatother

\usepackage[disable,textsize=tiny]{todonotes}

\newcommand{\prob}[1]{\Pr\left(#1\right)}

\newcommand{\expect}[1]{\mathds{E}\left[#1\right]}

\newcommand{\indfe}[1]{\ensuremath{\mathds{1}_{#1}}}
\newcommand{\indf}[1]{\indfe{\{#1\}}}

\theoremstyle{plain}
\newtheorem{theorem}{Theorem}[section]

\newtheorem{corollary}[theorem]{Corollary}
\theoremstyle{definition}

\theoremstyle{remark}

\newcommand{\X}{\mathcal{X}}
\newcommand{\Y}{\mathcal{Y}}
\newcommand{\Z}{\mathcal{Z}}
\newcommand{\R}{\mathbb{R}}
\newcommand{\N}{\mathbb{N}}
\newcommand{\F}{\mathcal{F}}

\newcommand{\guard}{\mathtt{guard}}
\newcommand{\countones}{\mathtt{count}}
\newcommand{\eval}{\mathtt{eval}}

\usepackage{mathrsfs}
\newcommand{\filtuc}[1]        {\mathscr{G}_{#1}}
\newcommand{\filt}[1]        {\filtuc{#1}}
\usepackage{algorithm}

\usepackage[noEnd=false]{algpseudocodex} %

\usepackage{xcolor}
\newcommand{\red}[1]{\textcolor{red}{#1}}
\newcommand{\problemname}[1]{\texorpdfstring{\text{{\sc #1}}}{#1}\xspace}
\newcommand{\onemax}{\problemname{OneMax}}

\newcommand{\leadingones}{\problemname{LeadingOnes}}
\newcommand{\jump}{\texorpdfstring{\text{{\sc Jump}}}{Jump}} %
\newcommand{\cycleuncover}{\problemname{CycleUncover}}
\newcommand{\stage}[2]{\textbf{Stage~#1} (\texttt{#2}):}
\newcommand{\vertexcover}{\problemname{VertexCover}}
\newcommand{\bandwidthvc}{\problemname{BandwidthVC}}
\newcommand{\wellcoveredvc}{\problemname{WellCoveredVC}}
\newcommand{\submodmin}{\problemname{SubmodularMin}}

\newcommand{\PK}[1]{
}

\usepackage{listings}
\usepackage{amssymb}
\usepackage{color}
\definecolor{keywordcolor}{rgb}{0.7, 0.1, 0.1}   %
\definecolor{tacticcolor}{rgb}{0.0, 0.1, 0.6}    %
\definecolor{commentcolor}{rgb}{0.4, 0.4, 0.4}   %
\definecolor{symbolcolor}{rgb}{0.0, 0.1, 0.6}    %
\definecolor{sortcolor}{rgb}{0.1, 0.5, 0.1}      %
\definecolor{attributecolor}{rgb}{0.7, 0.1, 0.1} %

\usepackage{pdfpages}
\usepackage{xspace}
\usepackage{newunicodechar}
\newunicodechar{𝒢}{$\mathscr{G}$}
\newunicodechar{ᵐ}{\ensuremath{^{m}}}
\newunicodechar{∏}{\ensuremath{\prod}}
\newunicodechar{𝕜}{\ensuremath{\mathbb{k}}}
\newunicodechar{⋃}{\ensuremath{\cup}}
\newunicodechar{ᶠ}{\ensuremath{^f}}

\title{Agentic Algorithm Design: LLMs that Propose, Formalise, and Prove Runtime Bounds for Evolutionary Algorithms in Lean}

\author{%
  Per Kristian Lehre\\
  School of Computer Science\\
  University of Birmingham, UK\\
  \texttt{p.k.lehre@bham.ac.uk}
}

\begin{document}

\maketitle
\ifarxiv\lhead{Preprint}\fi

\begin{abstract}
LLMs now prove hard theorems in Lean. Can they also \emph{design}
algorithms and prove their correctness and efficiency? LLM-guided
design of optimisation heuristics (such as AlphaEvolve) is purely
empirical. Automated theorem provers with LLMs prove given
statements. So far, there has been little evidence that LLMs can do
both: design an algorithm and guarantee its efficiency by a
mathematical proof.
We present a staged pipeline modelled after research methodology in
design and analysis of evolutionary algorithms. The agent proposes
black-box optimisation algorithms, and Lean guards filter out flawed
designs. The agent then defines a potential function and proves
drift bounds that guarantee an expected runtime improvement over the
baseline algorithm. At no point in the pipeline are the algorithms
compared empirically.
The agent generates a black-box optimisation algorithm with expected
runtime $O(n^2)$ on a \cycleuncover problem, where the classical
(1+1) EA needs runtime $e^{\Omega(n)}$. The pipeline also designs
and analyses a black-box optimisation algorithm for integer-valued
submodular minimisation with expected runtime $O(n^{2+2c})$, and it
achieves the classical runtime $O(n^2)$ on \leadingones. Across 8 problem
classes, 12 of 15 runs with frontier models complete the pipeline
with a machine-checked expected-runtime bound. In 9 of them the bound
is polynomial without relying on offline computation, which the
black-box model does not charge.
An ablation study shows that removing the staging and asking the
agent to zero-shot an algorithmic design with a runtime guarantee
leads to a copy of the seeded baseline algorithm (random search) or
to incomplete proofs. Open-weight models (Goedel-Prover-V2-32B
and Gemma 4) did not complete the pipeline.
Our study suggests that machine-checked specifications in Lean can
replace empirical evaluation in open-ended algorithm discovery. To the best of our knowledge, this is the first example
of an automated pipeline that designs efficient black-box
optimisation algorithms with machine-checked runtime bounds in
Lean.
\end{abstract}

\section{Introduction}

From the perspective of theoretical computer science, algorithm design
must produce two artefacts: First, the \emph{algorithm} itself, a finite
sequence of instructions that when executed in a computational model
transforms within a finite number of steps any problem instance (the
input) into an output (the solution) \citep{KnuthVol1}. Secondly, the
\emph{analysis} of the algorithm, a mathematical proof that the
algorithm produces the correct output on every problem instance, and a
guarantee for the amount of computational resources required by the
algorithm to produce that output (its complexity). In the context of
machine learning, analysis also involves guaranteeing additional
algorithmic properties, such as convergence rates, and regret and/or
generalisation bounds.  Inventing new and better algorithms involves an
iterative developmental process between the two artefacts: a new candidate design drives the analysis in a certain direction, and
obstacles in completion of the analysis can lead to insights into
problem structure which motivate alterations to the algorithm design.
The high cost and deep expertise needed for traditional algorithmic
design have motivated research into automated design of
heuristics. Recent LLM-based heuristic design approaches evaluate a
candidate pool of heuristics on benchmark functions, and
well-performing heuristics are varied through LLM-prompts to produce
better ones. While this has been successful in many domains, the
methodology is highly empirical and does not lead to any guarantees
on the correctness or performance of the heuristics produced.

It is a significant open problem whether AI agents can carry out
algorithmic design as seen from theoretical computer science.  It has
been argued that automated theorem proving (ATP) is challenging for
AI, because proofs are either correct or incorrect, not
in-between. Furthermore, the space of proofs is infinitely large \citep{buzzard_mathematical_2024}. Formal algorithmic design goes
beyond ATP, because it requires both navigating a space of algorithmic designs and automatically proving properties about these.
Yet, the improvement in mathematical capability of frontier AI models
over the last years has been remarkable. Long-standing conjectures
have been closed using publicly available reasoning models with
Lean-checked proof certificates \citep{sothanaphan_resolution_2026}. The time and cost needed to
rigorously prove non-trivial mathematical statements have fallen
significantly.

We take inspiration from the field of runtime analysis of evolutionary
algorithms which has a well-defined approach to design and analysis.
The field requires that algorithms fit in a black-box optimisation
framework \citep{droste_upper_2006} (more often called the query model
in TCS) where optimisation is formulated as an interaction between an
oracle which knows the function to be optimised, and an algorithm
which queries the oracle for fitness values. The field has developed a
small number of highly generic mathematical tools that aid in the
analysis. In particular, we build on drift analysis
\citep{hajek_hitting-time_1982,he_drift_2001} which measures the
progress of the algorithm with a potential function. The potential function, which is introduced for the purpose of the analysis and is
not part of the algorithm itself, maps the state of the algorithm into
a potential value. Bounds on the runtime can be obtained from
properties of the distribution of the drift (change in potential
per oracle query). The black-box model allows for an enormous variety
of algorithms, most of which are useless. To constrain algorithm search,
we use Lean guards to filter out provably flawed designs. We only
allow algorithms that are able to visit all parts of the search space,
that use randomness, and that take into account fitness responses
from the oracle.
The full staged pipeline is shown in \Cref{fig:staged-pipeline}.  The pipeline is seeded with a problem
card that provides the definition of the problem, its problem
instances, and basic properties. Additionally, the pipeline is seeded
with an initial algorithmic design with a stated runtime bound. In all
the experiments described here, we use uniform random search as the initial algorithm, with expected runtime $O(2^n)$. The staged
pipeline consists of some informal stages where an informal model
produces one or more prose descriptions of designs. There are also
formal stages where the agent needs to synthesise a definition or a
proof that is then verified by Lean. In case Lean reports a failure,
the formal stage is repeated, until it backtracks (determined by a
Luby sequence \citep{luby_optimal_1993}), or the formal stage passes. Some formal stages are
gates to filter out flawed designs, others collect definitions that are
used later, e.g., the definition of the algorithm or the potential.
If a run reaches the end of the pipeline, it obtains an
algorithm $A^{i+1}$ with a guaranteed improved runtime compared to
algorithm $A^i$. The pipeline can be looped, e.g., producing an
algorithm $A^{i+2}$ which beats $A^{i+1}$, and so on. In most of the experiments
described here, we make one iteration of the loop; two runs iterate it for three and five
generations.

\paragraph{Contributions.}
To our knowledge, we propose the first automated system for formal
design and analysis of black-box algorithms for pseudo-Boolean
optimisation. The system does not involve any empirical evaluation or
comparison of algorithms. The entire design process is driven by
informal algorithm proposals followed by mathematical formalisation
and analysis in Lean. Our contributions are:

\begin{itemize}
\item We formalise the human process of designing and analysing
  black-box optimisation algorithms through a staged pipeline which
  combines LLMs with Lean.
\item  We demonstrate that an agent using the staged pipeline is able to design efficient
  black-box algorithms with associated machine-checked runtime
  bounds in Lean for a range of problems. On \cycleuncover, the
  constructed algorithm has expected polynomial runtime, whereas the
  classical (1+1) EA has exponential runtime. The agent is able
  to exploit structure in anonymised problem classes.
\item We show through ablation studies that without the staged pipeline,
  the agent fails to construct efficient algorithms.
\end{itemize}

\section{Related work}\label{sec:related-work}

\begin{table}[t]
\centering
\caption{Overview of some related work in automating the design and formal analysis of algorithms. The first group machine-checks analyses of
  \emph{human-designed} algorithms. The second group produces
  machine-checked proofs, but of a theorem or specification
  \emph{given} to it. The third group \emph{discovers} algorithms, but
  validates them only empirically. This work lies at the intersection:
  the algorithm is proposed by an LLM, its analysis is produced by an
  LLM, and the property machine-checked is the expected runtime of a
  stochastic process.}
\label{tab:related-work}
\footnotesize
\setlength{\tabcolsep}{4pt}
\renewcommand{\arraystretch}{1.15}
\begin{tabular}{@{}>{\raggedright\arraybackslash}p{0.17\textwidth}>{\raggedright\arraybackslash}p{0.19\textwidth}>{\raggedright\arraybackslash}p{0.16\textwidth}>{\raggedright\arraybackslash}p{0.09\textwidth}>{\raggedright\arraybackslash}p{0.09\textwidth}>{\raggedright\arraybackslash}p{0.18\textwidth}@{}}
\toprule
\textbf{Work} & \textbf{Algorithmic artefact} & \textbf{Process} & \textbf{Designed by} & \textbf{Analysed by} & \textbf{Machine-checked property} \\
\midrule
\citet{li_formalization_2024}   & first-order convex methods   & deterministic    & human & human & convergence rate \\
\citet{vajjha_certrl_2021}      & value / policy iteration     & stochastic (MDP) & human & human & convergence \\
\citet{schaffeler_formally_2025} & approximate policy iteration & stochastic (MDP) & human & human & convergence \\
\midrule
\citet{ospanov_apollo_2025}; \citet{varambally_hilbert_2025}; \citet{li_goedel-code-prover_2026}
                              & (proof / code search)    & n/a              & n/a   & LLM   & given theorem or code specification \\
\midrule
\citet{romeraparedes_mathematical_2024}; \citet{novikov_alphaevolve_2025}; \citet{lange_shinkaevolve_2025}
                              & heuristics, constructions    & ---              & LLM   & ---   & none (empirical validation) \\
\citet{stein_llamea_2025}       & black-box optimisation algorithms (metaheuristics) & stochastic & LLM & --- & none (empirical benchmarking) \\
\midrule
\textbf{This work}              & black-box optimisation algorithms & \textbf{stochastic} & \textbf{LLM} & \textbf{LLM} & \textbf{correctness \& expected runtime} \\
\bottomrule
\end{tabular}
\end{table}

Research on automating the design and analysis of algorithms can be
classified according to the level of automation in the design and the level of
automation and rigour in the analysis (see \Cref{tab:related-work}).
\emph{Firstly}, there are efforts to formalise existing (informal) mathematical analyses of algorithms manually in proof assistants such as Lean, Coq or Isabelle/HOL. These contributions have the same
strong benefits as formalising any other type of informal mathematics
in Lean, including enabling machine-checked proofs, curation of a
digital library of mathematical results, and providing a foundation
for machine-generated mathematics
\citep{buzzard_mathematical_2024}. Recent efforts in this direction
include \citet{li_formalization_2024} who focused on first-order
convex optimisation methods, with Lean proofs of convergence rates for
some function classes.  They considered deterministic algorithms, and
the proofs therefore avoid technical challenges arising from
stochasticity.  In contrast, there are also Isabelle/HOL
formalisations of complexity analysis of randomised tree data
structures \citep{eberl_verified_2020}.  \citet{kaminski_weakest_2017}
introduced a calculus for formally deriving the expected runtime of
probabilistic programs, such as the coupon collector process. Recent formalisations in machine learning also consider stochastic effects
\citep{vajjha_certrl_2021}.  \citet{schaffeler_formally_2025} analysed
convergence of value and policy iteration for finite-state Markov decision processes.  Other work considers properties of
neural networks, e.g., formalising a classic result about the
expressive power of depth in neural networks
\citep{cohen_expressive,bentkamp_formal_2019}. In contrast to our
work, these works consider pre-existing algorithms and proofs, and
proofs are human-designed.
\emph{Secondly}, the area of LLM-assisted automated theorem proving in Lean
has gained significant momentum recently, with breakthrough results
announced during Summer/Autumn 2026. There are recent examples of
semi-automatic algorithm design and proof generation where humans with
assistance from frontier models design new algorithms with Lean
proofs.
\emph{Thirdly}, there is a large body of research where algorithm design is
automated via LLMs, but where evaluation of candidate algorithms lacks
machine-checked proofs and is entirely empirical
\citep{liu_systematic_2026}.  Systems such as AlphaEvolve
\citep{novikov_alphaevolve_2025}, ShinkaEvolve
\citep{lange_shinkaevolve_2025}, LLaMEA \citep{stein_llamea_2025}, and
EoH \citep{liu_evolution_2024} formulate algorithm design as a search
problem where candidate algorithms are evaluated empirically on a set
of benchmark functions, and an LLM is used to generate algorithm
variants.  These systems do not provide any machine-checked guarantees.  Related approaches in highly restricted problem domains
where correctness can be checked without formal proofs include
AlphaTensor, which optimises matrix multiplication algorithms via
reinforcement learning \citep{fawzi_discovering_2022}.

Apart from automation of algorithmic design and analysis, there are
other relevant works.
An orthogonal task to algorithm design is problem formulation
and modelling. \citet{robbins_flare_2026} proposed FLARE to certify
in Lean the correctness of mixed integer linear program (MILP)
reformulations of combinatorial optimisation problems.
Another related area is automated software verification. Here, the
goal is to synthesise proofs (e.g., in Lean) that a piece of software
satisfies a specification (pre-condition and post-condition) \citep{li_goedel-code-prover_2026}. More ambitiously, vericoding
studies AI systems that take formal specifications as inputs and
produce software along with proofs that the software satisfies the
specification \citep{bursuc_benchmark_2025}.

\section{Runtime analysis of black-box optimisation algorithms}\label{sec:runtime-analysis}

\subsection{Black-box optimisation algorithms}\label{sec:blackbox}

To formally define a design space of optimisation algorithms, we will
adopt the so-called \emph{black-box scenario} from theory of discrete
evolutionary algorithms \citep{droste_upper_2006}, also called query
(or decision tree) complexity in theoretical computer science
\citep{buhrman_complexity_2002}.  This scenario describes a broad
class of algorithms by defining the goal of the algorithm, what
information is available to an algorithm, what actions the algorithm
can take, and how its runtime is measured. The algorithm is given a
class of real-valued fitness functions $F$ over a search space
$\X$. An adversary picks a function $f\in F, f : \X\rightarrow\R$, and
the goal of the algorithm is to find an optimal search point $x^*$
such that $f(x)\leq f(x^*)$ for all $x$ (i.e., maximisation of $f$).
In each iteration $t$, the algorithm queries a search point
$x_t$ based on its previous history, and obtains from the oracle the
function value $f(x_t)$. The search point $x_t$ along with its fitness
$f(x_t)$ is added to the history, and the process repeats until an
optimum is found. The running time of the algorithm is defined as the
number of times the algorithm queries the oracle until the algorithm
queries an optimal search point for the first time. This definition is
based on the assumption that the evaluation of the fitness function is
the most time-consuming part of the algorithm. Following the literature on
runtime analysis of evolutionary algorithms, the algorithm does not specify
a termination criterion. A termination criterion could be specified, but it would not 
impact the runtime of the algorithm as defined below.
The entire interaction between the oracle and the optimisation
algorithm is given in Algorithm \ref{algo:blackbox}. Within the black-box scenario, the algorithm design is fully specified by a tuple
$A=(\Y,\init,\query,\step)$, which will be described below.  Executing
an algorithm $A$ involves executing \Cref{algo:blackbox} with the
given tuple $A$.

\begin{algorithm}[ht]
  \caption{Black-Box Optimisation Algorithm}
  \begin{algorithmic}[1]
    \Require Fitness function $f : \X\rightarrow\R$; random sequence $z_0,z_1, \ldots\in \Z$ with
    $z_t(i)\sim\Unif([0,1])$ i.i.d.\ for all $t,i\in\N$; algorithm $\red{A=(\Y,\init,\query,\step)}$ with
    $\init : \Z\rightarrow \Y$, $\step : \Z\times \Y\times \R\rightarrow \Y$ and $\query : \Y\rightarrow \X$.
    \State Initialise algorithm state $y_0 := \red{\init}(z_0)$
    \For{$t=0$ to $\infty$}
      \State Select search point $x_t := \red{\query}(y_t)$
      \State Query oracle to obtain fitness $f_t := f(x_t)$
      \State Compute the next state $y_{t+1} := \red{\step}(z_{t+1}, y_t, f_t)$
    \EndFor
  \end{algorithmic}
  \label{algo:blackbox}
\end{algorithm}

To allow the algorithm to make random decisions, the algorithm is
provided with a two-way infinite random source $(z_t)_{t\in\N}$, where
for each time step $t\in\N$ and index $i\in\N$, $z_t(i)$ is a
uniformly distributed value in $[0,1]$. We assume an underlying
probability triple $(\Omega,\mathscr{G},\Pr)$ and a filtration
$(\filt{t})_{t\in\N}$ to which $(z_t)_{t\in\N}$ is adapted.  An
algorithm $A$ is associated with a state space $\Y$ where $y_t \in \Y$
corresponds to the memory accessible to the algorithm at iteration
$t$. The initial state is obtained directly using the random source
$\init(z_0)$. In every iteration $t$, the algorithm uses the current
state $y_t$ to decide which search point $x_t$ to query next. The
algorithm transitions to the next state $y_{t+1}$ by taking into
account the next random value $z_{t+1}$, the previous state $y_t$, and
the objective value $f(x_t)$ of the query corresponding to the current
state.

Following \citep{droste_upper_2006}, the running time of algorithm $A$
on $f$ can then be defined by
$T_{A,f} = \min \{t \in \N\mid \forall x\in \X, f(x_t)\geq f(x)\}$.
By construction $(x_t)_{t\in\N}$ and $(y_t)_{t\in\N}$ are adapted to
$(\filt{t})_{t\in\N}$, and $T_{A,f}$ is a stopping time.  The
worst-case expected running time of $A$ over a class of functions $F$
is then
$T_{A,F} = \max_{f\in F} \expect{T_{A,f}}.\label{def:runtime-alg-a}$
All aspects of the above formalisation are indexed by an implicit
problem instance size $n\in\N$. In particular, for each instance size
$n$, we assume a corresponding search space $\X_n$, a fitness function
$f_n : \X_n\rightarrow\mathbb{R}$, an algorithm $A_n$, and a
worst-case expected running time $T_{A_n,F_n}$. We are primarily
interested in the asymptotic growth of $T_{A_n,F_n}$ as a function of
$n$.
Note that this formalisation guarantees that the algorithm is truly
black box: it must be defined independently of the objective
function $f$. Yet, due to the free choice of $\Y$ and $\init$,
$\query$, $\step$, the framework is highly general and can express
most commonly used black-box optimisation algorithms, such as
simulated annealing, population-based evolutionary algorithms, and
estimation of distribution algorithms.

\subsection{Drift analysis}\label{sec:drift}

Drift analysis
\citep{hajek_hitting-time_1982,he_drift_2001} is the main technique used to derive bounds on the runtime $T_A$, and it is
incorporated into our pipeline.  Most evolutionary algorithms are
Markov processes over their state space $\Y$: the state $y_t\in \Y$ of
the algorithm at iteration $t$ completely determines the distribution
of the state $y_{t+1}$ in the next iteration. In principle, Markov
chain theory could be applied to estimate the hitting time of the
process from its state transition matrix. However, the state spaces
induced by the algorithms are often so large that this approach
becomes impractical. Instead, one can introduce a \emph{potential
  function} $\Phi : \Y\rightarrow \R$ which given the current state of
the algorithm measures how far the algorithm is from reaching its
goal, i.e. $Z_t = \Phi(y_t)$. Note that the potential function is not
part of the design of the algorithm, but is only used for the drift
analysis. Designing a good potential function is often the most
challenging part of the analysis of the algorithm. Given
information about the distribution of the change in potential per time
step, i.e., the \emph{drift} $Z_{t+1}-Z_t$ of the process, drift theorems provide statements about the long term properties of the
process. Informally, the \emph{additive drift theorem}
states that if the expected drift is at least
$\varepsilon$, then the expected time to reach distance zero is
$Z_0/\varepsilon$. The proof of the theorem (see \citealp{corus_level-based_2018}) uses standard arguments from martingale
theory; the precise statement is \Cref{lemma:pol-drift} in \Cref{app:drift-proofs}. A Lean
formalisation of the general additive drift theorem will be published in a separate manuscript.
It is proved in the Lean library on which the pipeline builds, not assumed: for every run in which
we printed the axioms (rows 2--5 and 15 of \Cref{tab:runs}), the final runtime theorem depends only
on Lean's standard axioms \texttt{propext}, \texttt{Classical.choice} and \texttt{Quot.sound}.

\Cref{lemma:pol-drift} requires evaluation of an expectation
conditional on a $\sigma$-algebra. To shield the agent from proving
theorems involving measure theory, we introduce the following
simplified theorem which implies condition 1 in
\Cref{lemma:pol-drift}.  Proving drift condition 1 is now reduced to
proving bounds on the probabilities of improving and worsening
steps. To satisfy condition 2 in the additive drift theorem, we assume
the potential is bounded in $[0,1]$. Finally, to prove condition 3,
i.e., integrability of the runtime, we require a lower bound of
jumping directly to the target from every iteration. The set of
conditions exposed in the pipeline is illustrated in
\Cref{fig:drift-regions}. Together, they yield the runtime bound
$4/(\varepsilon p)$ used in the final stage of the pipeline. The proof
of Corollary~\ref{corollary:simple-drift} is given in
\Cref{app:drift-proofs}. Note that the corollary states probabilities
in terms of expectations of indicator random variables, while
the Lean formalisation uses probabilities directly.

\begin{corollary}\label{corollary:simple-drift}
  Assume that for all $t\in\mathbb{N}$, $X_t\in[0,1]$ and
  \begin{align*}
    &\text{(1)}\quad \indf{t<T}\,p\leq\expect{\indf{t<T}\indf{X_{t+1}\leq X_t-\varepsilon}\mid \filt{t}},\\
    &\text{(2)}\quad \expect{\indf{t<T}\indf{X_{t}< X_{t+1}}\mid \filt{t}}\leq \indf{t<T}\,p/2,\\
    &\text{(3)}\quad \expect{\indf{t<T}\indf{X_{t}+\varepsilon< X_{t+1}}\mid \filt{t}}\leq \indf{t<T}\,\varepsilon p/4.
  \end{align*}
  Then $\expect{\indf{t<T}(X_{t+1}-X_t+\varepsilon p/4)\mid \filt{t}}\leq 0$ for all $t\in\mathbb{N}$.
\end{corollary}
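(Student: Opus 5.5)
The plan is a pointwise decomposition of the one-step change into the three events that appear in conditions (1)--(3), followed by taking conditional expectations. Write $D_t := X_{t+1}-X_t$. Since $X_t,X_{t+1}\in[0,1]$, we have $D_t\in[-1,1]$. The key step is the pointwise inequality
\begin{align*}
  D_t \;\leq\; -\varepsilon\,\indf{X_{t+1}\leq X_t-\varepsilon} \;+\; \varepsilon\,\indf{X_t<X_{t+1}} \;+\; \indf{X_t+\varepsilon<X_{t+1}},
\end{align*}
which holds everywhere on $\Omega$ (we take $\varepsilon>0$; if $\varepsilon\le 0$ the events in (1) and (2) are nested and the argument adapts trivially).

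To verify this inequality, split into four cases according to the value of $D_t$. If $D_t\leq-\varepsilon$, only the first indicator is active, and $D_t\leq -\varepsilon$ holds. If $-\varepsilon<D_t\leq 0$, all indicators vanish, and $D_t\leq 0$ holds. If $0<D_t\leq\varepsilon$, only the second indicator is active, and $D_t\leq\varepsilon$ holds. If $D_t>\varepsilon$, the second and third indicators are active, and $D_t\leq 1\leq \varepsilon+1$ holds. The boundedness $X_t\in[0,1]$ is used only in this last case, to cap large upward jumps by $1$.

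Next, multiply the inequality by $\indf{t<T}$. This indicator is $\filt{t}$-measurable, since $T$ is a stopping time. Then take $\expect{\cdot\mid\filt{t}}$, using monotonicity and linearity of conditional expectation; all terms are bounded, so they are integrable. Apply condition (1), with its sign reversed because of the coefficient $-\varepsilon$, to the first term, condition (2) to the second, and condition (3) to the third. This gives
\begin{align*}
  \expect{\indf{t<T}D_t\mid\filt{t}} \;\leq\; \indf{t<T}\Bigl(-\varepsilon p + \tfrac{\varepsilon p}{2} + \tfrac{\varepsilon p}{4}\Bigr) \;=\; -\indf{t<T}\,\tfrac{\varepsilon p}{4}.
\end{align*}
Finally, add $\expect{\indf{t<T}\,\varepsilon p/4\mid\filt{t}}=\indf{t<T}\,\varepsilon p/4$, again by measurability of $\indf{t<T}$, to obtain the claim.

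The only nontrivial step is choosing the pointwise bound so that the coefficients match the thresholds $p$, $p/2$ and $\varepsilon p/4$. The third indicator must carry coefficient $1$ rather than $\varepsilon$, which is exactly why condition (3) has the extra factor $\varepsilon$. Everything else is routine manipulation of conditional expectations.
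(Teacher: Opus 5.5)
Your proof is correct and is essentially the paper's argument. The paper uses the same four events $E_1,\dots,E_4$ as your four cases and bounds $\indf{t<T}\indfe{E_i}(X_t-X_{t+1})$ pointwise on each, using $X_t\in[0,1]$ only on $E_4$; it then weakens $\indfe{E_3\cup E_4}\geq\indfe{E_3}$ to match condition (2), which you simply fold into one pointwise inequality whose indicators are exactly the events of (1)--(3).

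The only slip is your parenthetical about $\varepsilon\le 0$. For $\varepsilon<0$ the argument does not adapt: the coefficient $-\varepsilon$ of the condition-(1) indicator becomes positive, so the lower bound (1) points the wrong way. That case is degenerate instead, because (2) or (3) then has a nonpositive right-hand side, and $\varepsilon>0$ is the intended setting, which the paper implicitly assumes as well.
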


\begin{figure}[t]
  \centering
  \includegraphics[width=\linewidth]{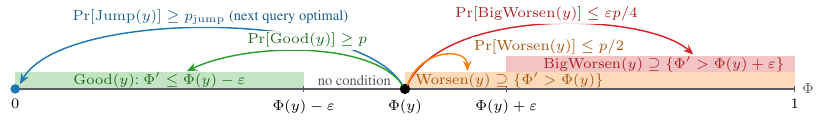}
  \caption{Drift conditions checked by the pipeline (Corollary~\ref{corollary:simple-drift}, Stages
    10f--10i) at a reachable, non-optimal state $y$. The sets of random draws $\Good(y)$, $\Worsen(y)$,
    $\BigWorsen(y)$ and $\Jump(y)$ are shown by where they send the next potential $\Phi'$; the arrows
    give the required probability bounds, which together yield $\expect{T_A}\le 4/(\varepsilon p)$.}
  \label{fig:drift-regions}
\end{figure}

\section{The Staged Pipeline}\label{sec:pipeline}

We now describe the motivation and details of the staged pipeline
shown in \Cref{fig:staged-pipeline}. The task of zero-shotting an
algorithmic design along with its runtime analysis is challenging. We
therefore break down this task into multiple stages according to how a
human researcher in theory of evolutionary algorithms would solve the
problem.  Starting from an initial algorithmic design we want to
improve, we make some informal design proposals. We describe the best
algorithm formally in pseudo-code, then design a potential function
which measures its progress, and finally apply a drift theorem. The
most challenging aspects of this process are to design the algorithm
and choose the potential function. The space of algorithmic designs
allowed by our framework is vast. We therefore propose a staged
pipeline where candidate designs are filtered through formal proof
obligations via proof guards. The formal guards are designed to filter
out flawed designs. The implementation, with its two backends (CLI and
vLLM) and the Kimina Lean server, and the structure of the stage
prompts are described in \Cref{sec:implementation}.

\begin{figure}[t]
  \centering
  \includegraphics[width=\linewidth]{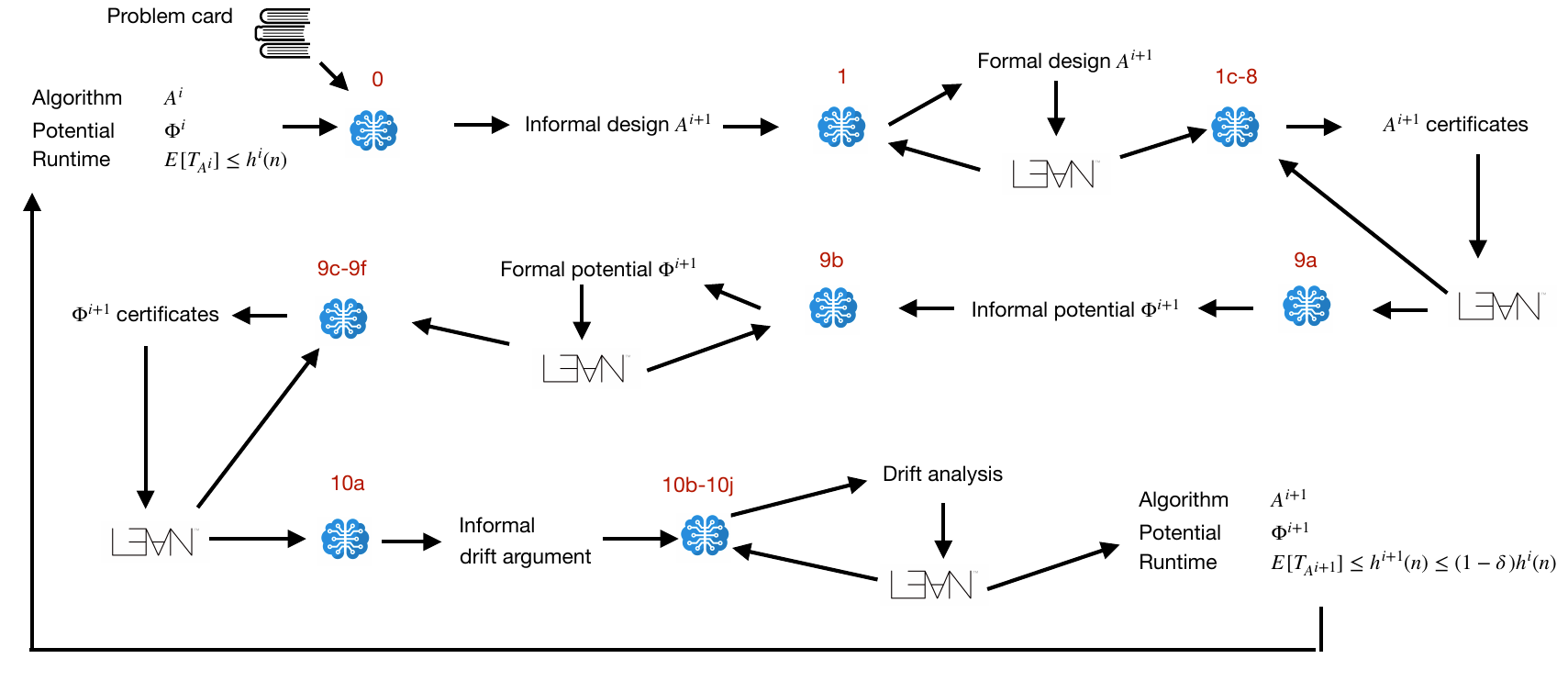}
  \caption{A simplified overview of the staged pipeline. The pipeline is seeded with
    a problem card, an algorithm $A^i$, and a
    proof of its running time. The pipeline produces a new algorithm
    $A^{i+1}$ with improved runtime bound relative to algorithm $A^{i}$.  
    \Cref{tab:pipeline-stages} provides detailed descriptions of
    the pipeline stages shown in red.
    \label{fig:staged-pipeline}
  }
\end{figure}

We now describe the stages in the pipeline informally. The pipeline
has informal stages where the agent proposes in English
an informal design (either of an algorithm or potential function), and
formal stages that are either definitions or formal guards. Formal
statements are stated fully in \Cref{tab:pipeline-stages} in
\Cref{app:pipeline-stages}.

\subsection{Algorithm design (Stages~0--8)}\label{sec:pipeline-algorithm}

Stages~0--8 produce an algorithmic
design along with proofs about its properties.
\stage{0}{DesignLock} We ask the agent to provide an informal
algorithm design along with justification why it would complete the
full pipeline. When using frontier models (rows 1--15 in
\Cref{tab:runs}), the agent is asked to sketch at least one
alternative design before committing to one design in a single
response.  With the open-weight models (rows 19--21 in
\Cref{tab:runs}), the informal model generates eight proposals
sequentially. Each proposal is prompted with all previous ones and
asked to differ from them. A winning proposal is then selected through
a knockout tournament. The proposals were compared in pairs by the
informal model, and the winners advance until one design
remains. Stages~9a and 10a use the same procedure in these runs.
\stage{1}{ConstructAlgorithmComponents} We ask the agent to formalise
in Lean the algorithm as a tuple $A=(\Y,\init,\step,\query)$.
\stage{1c}{StepMeasurability} We require that the step function is a measurable function. This
condition will be required for the drift analysis later in the
pipeline.
\stage{2}{Completeness1} We require that the algorithm must
be able to query any search point in every iteration. More precisely,
for every search point $x\in \X$, and for every reachable state
$y\in \Y$, there exists a random input $z\in \Z$ and fitness value
$f\in \R$, such that if the algorithm receives fitness $f$ for the
query associated with state $y$, it will transition on random input
$z$ to a state where $x$ is queried.  This condition is satisfied by
almost all evolutionary algorithms, e.g., those that involve bit-wise
mutation. Note that the completeness condition can be trivially
satisfied for an arbitrary algorithm by querying a uniform search
point with some non-zero (but perhaps very small probability) in every
step.  The completeness condition is not always necessary for an
efficient algorithmic design. However, the condition makes it
straightforward to prove that the algorithm eventually queries the
optimal search point (i.e., the algorithm is correct), and that for
finite search spaces, the algorithm has a finite expected hitting time
which is required by our formulation of the drift theorem.  The extra
assumption that $y$ is a reachable state and not an arbitrary state
was included to avoid imposing the condition on inconsistent states
that the algorithm never visits. For example, the state of the UMDA
algorithm is a probability vector $p\in[0,1]^n$ where $p_i, i\in[n],$
is the probability of sampling a 1-bit in position $i$. The algorithm
maintains the invariant $p_i\in[1/n,1-1/n]$ in order not to get stuck
in a particular bit-position.
\stage{3}{NonAbsorbing} We require that
the algorithm uses the random input. More precisely, for every
reachable state $y\in \Y,$ and any oracle response $f\in \R$, there
exist two random inputs $z_1,z_2\in \Z$ such that the algorithm in
the next iteration makes different queries on random input $z_1$ and
$z_2$. This condition is not necessary to have an efficient
algorithm. However, the condition filters out many poor algorithms,
such as those that only return a constant value.
\stage{4}{NotRandomSearch} We require that the algorithm can reach a
state where the query distribution is different from the uniform
distribution (random search). Random search has expected running time
$2^{\Omega(n)}$, and any efficient algorithm must have a different
query distribution than random search. We observed that weaker models
applied to the pipeline with random search seeded as algorithm$_0$
tended to produce designs that were essentially random search.  In an
early version of the pipeline, we did not include the reachable state
assumption in the condition. We discovered that this would allow the
model to cheat by devising a state space with unreachable witness
states $\Y$ which could serve as witnesses for the {\tt
  NotRandomSearch} and other properties.
\stage{5}{Adaptiveness} We
require that the state update of the algorithm depends on the fitness
responses from the oracle. Clearly, no algorithm which completely
ignores the oracle can be an effective optimiser on even simple
problem classes such as \onemax.
\stage{6}{QueryNonInjectivity} We
require that there exist two different states which lead to the same
query, i.e., the function $\query$ is not injective. This condition
ensures that the state carries information beyond the current query,
such as the best search point or fitness seen so far, or a learned
parameter, and filters out designs whose state is just the current
search point.  The remaining algorithm
construction stages (Stages~7--8) are designed to package all the definitions
into a Lean structure. This will allow the later proof infrastructure
to process the algorithmic design as one mathematical object. These
stages also include measurability requirements, e.g., that all the
components of the algorithm are measurable functions.  The next main
part is the drift analysis. We would like the agent to prove that the
three conditions of Corollary \ref{corollary:simple-drift} are
satisfied, where $X_t$ corresponds to the potential of the algorithm
at iteration $t$. The pipeline asks the agent to, for every state $y$,
define three subsets of random inputs
$\Good(y),\Worsen(y),\BigWorsen(y)$ corresponding to the three
conditions respectively, drift parameters $\varepsilon$ and $p$, and a
jump probability lower bound $p_{\text{jump}}$.

\subsection{Potential function (Stages~9a--9f)}\label{sec:pipeline-potential}

Stages~9a--9f produce a potential function
used to measure the progress of the algorithm.
\textbf{Stages~9a--9b} (\texttt{PotentialDesign} and
\texttt{DefinePotential}): In the informal Stage~9a, the agent
produces several candidate potential functions together with the
parameters above and an informal ``drift story'' explaining why the
potential function applied to the algorithm has drift towards the
optimum. For experiments with frontier models (Rows 1--15 in
\Cref{tab:runs}), the agent proposes three candidates one after
another, each critiquing the earlier ones, and ends its response with
a ranking. The top-ranked candidate is then formalised in Stage~9b. We
require that the value of the potential is always in the interval
$[0,1]$. Although not required for drift analysis, this requirement
which is not difficult to satisfy through normalisation simplifies the
formal drift analysis machinery later in the pipeline.
\stage{9c}{PotentialOpt} We require that the
potential of a state $y$ is 0 if and only if the algorithm queries an
optimal search point in state $y$.  This is a necessary condition for
the drift theorem to give a meaningful statement about the runtime of
the algorithm.
\stage{9d}{PotentialFineGrained} The seeded design
uses a potential function with only the values 0 and 1, where 0
corresponds to optimal states and 1 all other states. This is
sufficient to prove runtime bound for random search, however it will
be insufficient to capture the gradual progress of an efficient
algorithm. Experiments with weaker models often reproduced the
two-valued potential function. We therefore added the formal guard
that the potential function must contain a value between 0 and 1.
\stage{9e}{PotentialMeasurability} We require that the potential
function is measurable. This property is needed for the later drift
analysis.  The remaining guards in this part of the pipeline involve
proving measurability properties of the potential function, and
packaging into a Lean structure for easy handling later (Stage~9f, \texttt{ConstructPotentialStructure}).

\subsection{Drift certificates and runtime bound (Stages~10a--10j)}\label{sec:pipeline-drift}

Stages~10a--10j prove drift conditions for the potential
and obtain a final runtime bound by applying a drift theorem.
\stage{10f}{PositiveDrift} We
require that for any reachable, non-optimal state $y$, if the
algorithm receives a random input $z\in\Good(y)$, then the potential
decreases by at least $\varepsilon$ in the next iteration.
\stage{10g}{NegativeDriftAndJump} We require that for any non-optimal
reachable state $y$ and any random input $z\in \Z$, if the potential
increases in the next iteration, then $z$ belongs to $\Worsen(y)$, and
if it increases by more than $\varepsilon$, then $z$ belongs to
$\BigWorsen(y)$.  Furthermore, for any reachable state $y$, if the
algorithm receives input $z\in \Jump(y)$, then the potential in the
next iteration is 0. In other words, the algorithm jumps directly to an
optimal search point whenever it receives a random input in
$\Jump(y)$.
\stage{10h}{DriftSetMass} We require that, for every
reachable, non-optimal state $y$, the probabilities that the random
input lies in $\Good(y)$, $\Worsen(y)$ and $\BigWorsen(y)$ satisfy
conditions (1)--(3) of Corollary~\ref{corollary:simple-drift}
(\Cref{fig:drift-regions}).
\stage{10i}{JumpMass} We require that
from all states $y$, $p_{\text{jump}}\leq \prob{z\in\Jump(y)}$. When
satisfied, this condition implies that the runtime of the algorithm is
stochastically dominated by a geometrically distributed random
variable with success probability $p_{\text{jump}}$. The expected
runtime is therefore finite, which implies one of the conditions of
the additive drift theorem. The only requirement on $p_{\text{jump}}$
is that it is positive.
\stage{10j}{RuntimeBound} The final stage is
an application of a runtime bound $\expect{T_A}\leq 4/(\varepsilon p)$
that follows immediately from the previously collected proof
certificates.

\section{Results}
\Cref{tab:runs} summarises all runs; \Cref{fig:completion} in \Cref{app:stage-stats} shows how
each run progressed through the pipeline. Across the completed runs, the drift and runtime stages (10a--10j) take about half of the
wall-clock time, and the incomplete frontier-model runs all stop at a drift stage, most often the
mass bound of Stage~10h (\Cref{app:stage-stats}).
\begin{table}[tbp]
\centering
\caption{All runs. Each row is one pipeline session driven by the named model. Rows 1--18 use the
CLI backend, through which the model exchanges prompts and responses with the pipeline over
stdin/stdout; rows 19--21 use the vLLM backend (\Cref{sec:implementation}). ``Staged'' is the full pipeline of \Cref{tab:pipeline-stages}; ``zero-shot'' is the
one-stage ablation of \Cref{sec:results-zeroshot}. Rows 16--21 are ablations: the zero-shot pipeline, or a weaker model; rows 19--21 use open-weight models run on HPC clusters (\Cref{app:run-open}). \emph{Complete} means that every stage was
machine-checked with no \texttt{sorry}; otherwise the last stage passed is given in parentheses (the run stopped at the following stage). \emph{Order} is the order of the machine-checked bound on $\expect{T_A}$, which holds for all $n\ge3$
and every instance (for staged runs it is $4/(\varepsilon_1p_1)$). \emph{Tokens} is the number of
output tokens (including reasoning) generated by the model that drove the run, from launch to the
last accepted stage; n/r means not recorded. \emph{Lean} is the number of lines of Lean code the
model contributed to the accepted proof, excluding blank lines, comments, the prompt, the API and the
problem card. LO is the fitness of the elitist reference point; the \leadingones card marked
``$e^{-1}$ lemma'' additionally supplies the inequality $e^{-m/(r-1)}\le(1-1/r)^m$. $k$ is the
bandwidth for \bandwidthvc and the gap width for \jump$_k$, and $n^c$ bounds the values of a \submodmin instance.
\emph{\#gen} is the number of generations of the pipeline used to obtain the result; for a run with
several generations, the row reports the last generation only (Tokens and Lean are those of that
generation). Pseudocode, potentials and constants of every run are given in \Cref{app:runs}.}
\label{tab:runs}
\scriptsize
\setlength{\tabcolsep}{3pt}
\renewcommand{\arraystretch}{1.15}
\begin{tabular}{@{}r@{\hspace{5pt}}p{2.2cm}p{1.3cm}p{1.05cm}c@{\hspace{4pt}}>{\raggedright\arraybackslash}p{2.55cm}p{1.1cm}p{1.2cm}rrl@{}}
\toprule
\# & Problem & Model & Pipeline & \#gen & Algorithm & Complete & Order & Tokens & Lean & App.\\
\midrule
1 & \onemax & Opus 5 & staged & 1 & $(1{+}1)$ EA, rate $\frac1{n+1}$ & yes & $O(n^3)$ & n/r & 2{,}036 & \ref{app:run-onemax}\\
2 & \leadingones & Opus 5.5 & staged & 1 & fitness-dependent rate $\frac1{\mathrm{LO}+2}$ & yes & $O(n^3)$ & 160k & 1{,}262 & \ref{app:run-lo}\\
3 & \leadingones, $e^{-1}$ lemma & Opus 5.5$^{\S}$ & staged & 1 & fitness-dependent rate $\frac1{\mathrm{LO}+2}$ & yes & $O(n^2)$ & 191k & 1{,}553 & \ref{app:run-lo2}\\
4 & \jump$_k$ & Opus 5.5 & staged & 1 & $(1{+}1)$ EA, rate $\frac1{n+1}$ & yes & $O(n^{k+2})$ & 144k & 1{,}086 & \ref{app:run-jump}\\
5 & \jump$_k$, $e^{-1}$ lemma & Opus 5.5 & staged & 3 & rate cycling with a learned phase bit & yes & $O(n^{k+1})$ & 396k & 1{,}335 & \ref{app:run-jump-exp}\\
6 & \cycleuncover & Fable 5.1 & staged & 1 & cycle walk (learn-then-solve) & yes & $O(n^2)$ & 538k & 2{,}373 & \ref{app:run-laminate}\\
7 & \wellcoveredvc & Fable 5.1 & staged & 1 & fitness-directed asym.\ mutation & yes & $O(n^4)$ & 366k & 1{,}756 & \ref{app:run-wc}\\
8 & \wellcoveredvc, anonymised card & Fable 5.1 & staged & 1 & polarity-adaptive asym.\ mutation & no (10g) & -- & 275k & 1{,}282 & \ref{app:run-wcvc-fable}\\
9 & \wellcoveredvc, anonymised card & Opus 5.5$^{\ddagger}$ & staged & 1 & regime-switching asym.\ mutation & yes & $O(n^4)$ & 245k & 1{,}530 & \ref{app:run-wcvc-opus}\\
10 & \vertexcover,\newline flat fitness & Fable 5.1 & staged & 1 & edge probing + exact solve & yes & $O(n^3)$ & 400k & 1{,}889 & \ref{app:run-vc}\\
11 & \bandwidthvc,\newline flat fitness & Fable 5.1 & staged & 1 & window probing + exact solve & yes & $O(kn)$ & 460k & 1{,}658 & \ref{app:run-bvc}\\
12 & \bandwidthvc & Fable 5.1 & staged & 1 & direction-adaptive asym.\ mutation & no (10g) & -- & 402k & 1{,}202 & \ref{app:run-bvc2}\\
13 & \bandwidthvc & Fable 5.1 & staged & 1 & fill-then-prune mutation & no (10b) & -- & 192k & 510 & \ref{app:run-bvc-2nd}\\
14 & \bandwidthvc & Opus 5.5 & staged & 1 & window-repair random search & yes & $\Theta(2^n)$ & 283k & 1{,}784 & \ref{app:run-bvc-opus}\\
15 & \submodmin,\newline integer-valued & Opus 5.5 & staged & 5 & Lov\'asz subgradient descent & yes & $O(n^{2+2c})$ & 1.1M & 2{,}901 & \ref{app:run-submod}\\
\midrule
16 & \cycleuncover & Fable 5.1 & zero-shot & 1 & random search & yes & $\Theta(2^n)$ & 41k & 404 & \ref{app:run-laminate-zs}\\
17 & \cycleuncover & Sonnet$^{\dagger}$ & staged & 1 & elitist bit-flip (5 designs) & no (10g) & -- & n/r & 1{,}146 & \ref{app:run-laminate-sonnet}\\
18 & \bandwidthvc & Fable 5.1 & zero-shot & 1 & $(1{+}1)$ EA, rate $\frac1n$ & no & -- & 172k & 112 & \ref{app:run-bvc-zs}\\
19 & \onemax & Goedel-Prover-V2-32B + Gemma~4$^{\P}$ & staged & 1 & per-bit learned bias (PBIL-like) & no (3) & -- & 25.1M & 375 & \ref{app:run-open}\\
20 & \onemax & Goedel-Prover-V2-32B + Gemma~4$^{\P}$ & staged & 1 & per-bit learned bias (PBIL-like) & no (3) & -- & 73.1M & 316 & \ref{app:run-open}\\
21 & \onemax & Goedel-Prover-V2-32B + Gemma~4$^{\P}$ & staged & 1 & random search (fixed biases $\frac12$) & no (9c)$^{\|}$ & -- & 124.3M & 725 & \ref{app:run-open}\\
\bottomrule
\end{tabular}

\smallskip
\raggedright
$^{\dagger}$ Model version not recorded in the artefacts. $^{\ddagger}$ Candidates for the last five
stages were pre-checked directly against the Lean server. $^{\S}$ Candidates for the last three stages were pre-checked in the same way. $^{\P}$ Goedel-Prover-V2-32B for the formal stages (pass@256), Gemma~4 26B-A4B for the prose stages; earlier versions of the staged pipeline with the same 28 stages; tokens are those of all logged samples of the formal model. $^{\|}$ Passed Stage~4 only through a loophole in that pipeline version (see \Cref{app:run-open}). In rows 1--18, all models are Claude models.
\end{table}

\subsection{Theoretical benchmark functions}
Runtime analysis of new evolutionary algorithms is often initiated on
simple theoretical benchmark functions which have a clear structure. Here, we consider the three most common functions, \onemax,
\leadingones, and \jump$_k$, which are defined in \Cref{app:benchmarks}.

The \onemax value of a bitstring is the number of positions in which it differs from
some hidden bitstring $z$. Standard evolutionary algorithms have
expected runtime $O(n\log n)$, which can be seen via a multiplicative
drift argument \citep{doerr_multiplicative_2012}: The probability that
a random bit-flip in the current bitstring improves the fitness is
proportional to the number of bits it has in common with $z$. The
optimal algorithm runs in $\Theta(n/\log n)$ queries by querying the
fitness of $\Theta(n/\log n)$ uniformly sampled bitstrings from which
the hidden bitstring $z$, and the optimum, can be determined with high
probability.
One generation of the staged pipeline rediscovers a variant of the
(1+1) EA with mutation rate $1/(n+1)$ rather than the standard
$1/n$. The true expected runtime of this algorithm is $O(n\log n)$,
however the agent obtains the weaker (but still mathematically
correct) upper bound $O(n^3)$. As potential function, the agent uses
essentially the normalised Hamming-distance to the optimum. The weak
runtime bound is partly due to the pipeline which forces the agent to
use Corollary~\ref{corollary:simple-drift} which does not allow for
multiplicative drift.

The \leadingones value of a bitstring is the number of consecutive
bits it has in common with some hidden bitstring $z$, when the bit
positions are ordered by a hidden permutation $\sigma$.  Typical evolutionary algorithms have runtime $O(n^2)$ which can be seen by an
additive drift argument: a random bit-flip in a selected bitstring
improves its fitness with probability $\Theta(1/n)$, and $\Theta(n)$
improvements are needed to reach the optimum.
One generation of the staged pipeline discovers a variant of (1+1) EA,
but where the mutation rate is chosen as $1/(f(x)+2)$ where $f(x)$ is
the fitness of $x$. This fitness-adaptive mutation rate is nearly
identical to the optimal adaptive mutation rate of $1/(f(x)+1)$
\citep{schaefer_optimal_2010}. For this algorithm, the agent obtains a
sub-optimal runtime of $4(n+1)(n+2)^2=O(n^3)$, while the true runtime
for this algorithm is $(e/4)n^2(1+o(1))$ (Theorem 5 in
\citep{schaefer_optimal_2010}). When inspecting the drift analysis of
the agent, we observed that the missing factor $n$ comes from the
weak lower bound $1/n\leq (1-1/r)^{n-1}$. We therefore created another
problem card which provided the inequality
$1 - 1/r \geq \exp(-1/(r-1))$. The agent now produced a (1+1) EA-type
algorithm with the same fitness-dependent mutation rate and runtime $4e(n+1)(n+2)=O(n^2)$.

The $\jump_k$ function takes the value $k+\onemax$, except for
non-optimal search points at Hamming distance $d<k$ from the optimum
which take the value $n-\onemax$. The problem therefore has a fitness
gap (a local optimum) at distance $k$ from the optimum. Typical
evolutionary algorithms have runtime $O(n^k)$ because they need to
flip $k$ bits simultaneously to jump from the local optimum to the
global optimum. By using more sophisticated diversity and variation
operators, it is possible to reduce the runtime further.
One generation of the staged pipeline discovered a variant of (1+1) EA
with fixed mutation rate $1/(n+1)$. The agent proved a runtime bound
of $O(n^{k+2})$ which is a factor of $n^2$ weaker than the true
runtime for this algorithm which is $O(n^k)$. A factor $n$ was lost
due to using a weak lower bound $p^k(1-p)^n\geq p^{k+1}$ for
$p=1/(n+1)$. In a second set of experiments, we ran the pipeline for
three generations and added the inequality
$1 - 1/r \geq \exp(-1/(r-1))$ to the problem card. This produced an
algorithm with machine-checked runtime $O(n^{k+1})$.

\subsection{\cycleuncover}\label{sec:results-cycle}
The runtime analyses of evolutionary algorithms on \onemax,
\leadingones, and \jump$_k$ are widely known. It is likely that the
LLMs used in this study have seen many of these during their
pre-training (although their Lean formalisations are as far as we know
not available at the time of writing). To rule out the possibility
that the agent simply restates Lean formalisations of well-known
algorithms and analyses, we would like to test the agent on a new
problem not previously discussed in the literature. We therefore
designed \cycleuncover (defined in \Cref{fig:cycleuncover}), which is a problem solvable in polynomial time,
but not by a typical evolutionary algorithm. The problem is
independent set on a cycle graph, where the length of the cycle is
even by construction. The fitness function is defined such that
infeasible solutions lead to a deceptive trap at large Hamming
distance from the optimum. Given the hidden permutation $\sigma$, a
solution is simply to choose every second vertex along the permutation
cycle.

\begin{figure}[t]
  \centering
  \begin{minipage}[c]{0.46\textwidth}
    \centering
    \includegraphics[width=\linewidth]{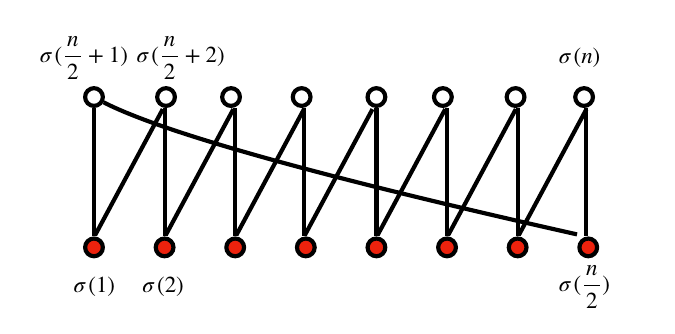}
  \end{minipage}\hfill
  \begin{minipage}[c]{0.52\textwidth}
      Given a permutation  $\sigma : [n]\rightarrow [n]$, let
      $h := \lfloor n/2\rfloor$ and define
      \begin{align*}
        \countones(x) & := \sum_{j=1}^{h} \left(x_{\sigma(j)} + x_{\sigma(h+j)}\right)\\
        \guard(x) & := \forall j\in[h],\;
                    x_{\sigma(j)} + x_{\sigma(h+j)}\leq 1 \\
                  & \qquad \wedge\;
                    x_{\sigma(j)} + x_{\sigma(h+(j \bmod h)+1)}\leq 1 \\
        \eval(x)  & :=
                    \begin{cases}
                      2h + \countones(x) & \text{ if } \guard(x)\\
                      \countones(x)     & \text{ otherwise }
                    \end{cases}
      \end{align*}
  \end{minipage}
  \caption{\cycleuncover. Right: problem definition in human-readable
    form. Left: visualisation of an instance for even $n$ (not shown
    to the agent). The vertices $\sigma(1),\dots,\sigma(2h)$ form a
    cycle of even length $2h$; for odd $n$, bit $\sigma(n)$ does not
    affect the fitness. The problem is to find a maximum independent
    set of the cycle; an optimal selection, of value $3h$, is shown
    in red.}
  \label{fig:cycleuncover}
\end{figure}

The following theorem states that the classical (1+1) EA has expected
exponential runtime on this problem (see \Cref{app:cycle-lb} for a
proof).
\begin{theorem}\label{thm:oneplusone-cycleuncover}
  The expected runtime of the (1+1) EA on \cycleuncover is $e^{\Omega(n)}$.
\end{theorem}

The agent produced an algorithm for \cycleuncover with a
machine-checked expected runtime of $O(n^2)$ (row 6 of
\Cref{tab:runs}).  The algorithm works differently from an
evolutionary algorithm: it builds up a two-colouring of the graph, and
then queries the optimum. The two-colouring is obtained by keeping a
current vertex $u$ which is coloured, and then by repeatedly sampling
uniformly at random a vertex $v$ until $v$ is both uncoloured and
adjacent to $u$. This newly found vertex is coloured oppositely to $u$
and becomes the new current vertex. To determine whether two vertices
$u$ and $v$ are adjacent in the graph, the algorithm queries the
fitness of the bitstring where only $u$ and $v$ are selected. As shown
in \Cref{fig:cycleuncover}, the fitness is 2 if and only if $u$ and
$v$ are connected.  After the first step, $u$ has at most one
uncoloured neighbour, and the probability of sampling one is
$\Omega(1/n)$. The algorithm therefore needs $O(n)$ queries in
expectation to obtain the next vertex $v$.  Repeating this procedure
$O(n)$ times, the algorithm obtains a two-colouring after $O(n^2)$
queries in expectation.  From the complete two-colouring, the
algorithm can queries the optimum by choosing only vertices with the
same colour. Since the cycle has even length, vertices of either
colour class is optimal.  This result suggests that, even for a
problem that to our knowledge has not been studied in this form, the
pipeline can identify the problem structure and provide a tailored and
efficient algorithm.

\subsection{\vertexcover}
We now turn to instance classes of the NP-hard minimum \vertexcover
problem. Given an undirected graph $G=(V,E)$, the minimum vertex cover
problem asks for the smallest subset $V'\subseteq V$ of the vertices
such that for each edge $\{u,v\}\in E$, at least one of the vertices
$u$ and $v$ belong to $V'$. This problem is NP-hard, but a simple
greedy algorithm returns a 2-approximate solution in polynomial time.
We do not attempt to design approximation algorithms here, we focus on
finding a globally optimal vertex cover.
We adopt the conventional pseudo-Boolean representation of
\vertexcover instances.  Given a graph with $n$ vertices
$V=\{1,2,\ldots,n\}$, we encode a subset of the vertices $V'$ by a
bitstring $x \in \{0,1\}^n$, where $x_i=1$ if and only if vertex
$i\in V'$. We adapt the multi-objective formulation from
\citep{kratsch_fixed-parameter_2013} into a single-objective fitness
function which counts the number of chosen vertices (first term),
and the number of uncovered edges (second term).
\begin{align}
  f(x) = -\sum_{i=1}^n x_i - (n+1)\sum_{\{i,j\}\in E} (1-x_i)(1-x_j).\label{eq:vertex-cover}
\end{align}
Classical EAs applied naively to \vertexcover perform
poorly.  The (1+1) EA and variants thereof need at least exponential
time to achieve an approximation factor better than
$(1-\varepsilon)/\varepsilon$ for
$n^{\delta-1}\leq \varepsilon\leq 2$, i.e., almost arbitrarily bad
approximation ratio
\citep{friedrich_approximating_2010,oliveto_analysis_2008}. In
contrast, the trivial greedy algorithm obtains a 2-approximation in
polynomial time.
Using more sophisticated EAs and adding more problem-specific
information can improve the
performance. \citet{kratsch_fixed-parameter_2013} used the number of
selected vertices as first objective, and the number of uncovered
edges as second objective. With this formulation, a multi-objective EA
obtains an optimal vertex cover in expected time
$O(OPT\cdot n^4+n\cdot 2^{OPT^2+OPT})$, where $OPT$ refers to the size
of the smallest vertex cover. Using the ILP-relaxation as second
objective leads to a lower runtime bound. Assuming only problem
instances where $OPT\leq k$ for some fixed parameter $k$ yields a
polynomial runtime, i.e., the algorithm is a fixed-parameter
algorithm. OPT cannot be bounded by a constant in general instances,
and the runtime bound is therefore exponential on the entire problem
class.

We would like to consider non-trivial combinatorial optimisation
problems where the agent cannot rely on published results from the
theoretical literature on black-box optimisation algorithms, or on
formalisations in Mathlib. We also want to consider problems which we
expect to be in $P$. We therefore chose the problem of computing a
minimum vertex cover in well-covered graphs, which we call
\wellcoveredvc.  Recall that a vertex cover is \emph{minimum} if the
graph admits no vertex cover with fewer vertices.  A vertex
cover is called \emph{minimal} (not to be confused with minimum) if no strict subset of $V'$ is a vertex cover. By definition, a
minimum vertex cover is always a minimal vertex cover, but the
converse is not true for all graphs.  Graphs where the minimum and
minimal vertex covers coincide are called \emph{well-covered} graphs.

The fitness function in Eq. \eqref{eq:vertex-cover} combined with the
choice of graph class gives the problem several structural properties
that are helpful for search heuristics, and which are provided to the
agent in the problem card.  Given two search points with different
number of uncovered edges, the one with fewer uncovered edges always
has higher fitness (exposed as lemma {\tt
  wc\_F\_lt\_of\_uncovered\_lt} in the problem card).  Also, if
$\{i,j\}\in E$ is an uncovered edge in $x$, then flipping bit
$x_i$ reduces the number of uncovered edges (exposed as lemma {\tt
  wc\_uncovered\_flip\_lt} in the problem card).
It is also straightforward to see that $f$ contains no local
optima. Consider any non-optimal search point $x$. If $x$ is not a vertex cover, then there must exist an uncovered edge $\{i,j\}$, and
a search point $x'$ obtained by flipping bit $i$ or bit $j$ in $x$ has strictly
higher fitness than $x$. Assume the other case, that $x$ is a vertex
cover but not a minimum vertex cover. Then since the graph is
well-covered, $x$ is also not a minimal vertex cover. Therefore, there
exists a vertex $i$ such that $x_i=1$ and flipping bit $i$ in $x$
produces a strictly fitter vertex cover $x'$.  The fitness therefore
increases.

In a first set of experiments, we provided a problem card containing
all the structural information described above. In this case, the
agent produced an optimisation algorithm similar to a (1+1) EA, but
with regime-changing asymmetric mutation rates. As long as the current
search point is not a vertex cover (revealed by the fitness being less
than $-n$), the mutation-rates were chosen asymmetrically so that
flipping 0-bits into 1-bits was twice as likely as the reverse.  Once
the current search point represented a vertex cover, the mutation-rate
was chosen asymmetrically in the opposite direction, making bit-flips
from 1 into 0 twice as likely as bit-flips from 0 into 1. The overall
runtime obtained was $O(n^4)$. The problem card in the first set of
experiments revealed all the structural information needed to design
an efficient algorithm. In a second set of experiments, we created an
anonymised problem card which does not mention the terms vertex cover
or well-covered graphs, only providing raw fitness definitions. An
experiment with Fable 5.1 failed to obtain an algorithm in this
setting. However, Opus 5.5 successfully generated a (1+1) EA with
regime-changing mutation rates. This time, the asymmetry between the
rates was more pronounced, with the high mutation rate a factor
$\Theta(n)$ larger than the small rate. Overall, the obtained
runtime was $O(n^4)$.

The problem \bandwidthvc is
\vertexcover restricted to graphs with bandwidth at most $k$. A graph
$G=(V,E)$ with vertex set $V=\{0, 1, \ldots n-1 \}$ is said to have
\emph{bandwidth} $k$ if for all edges $\{u,v\}\in E$, it holds
$|u-v|\leq k$. A low bandwidth $k$ implies that edges are only allowed
between ``nearby'' vertices with respect to their numbering. In
particular, $k=1$ allows only subgraphs of the path $0,1,\ldots,n-1$, while $k=n-1$ allows
every graph.

Since \vertexcover is NP-hard, we did not expect the agent to produce a
good algorithm for general graphs. With a flat fitness function, which
only reveals whether a search point is a cover and its number of
1-bits (\Cref{app:runs}), the agent instead came up with an algorithm
that probes all the $\Theta(n^2)$ potential edges in the graph to learn
the problem instance, then without making further queries, computes
offline the optimum for that instance, and finally queries the optimum
(row 10, \Cref{app:run-vc}). The overall runtime bound was $O(n^3)$.
This demonstrates that the agent is able to ``cheat'' this evaluation
framework by exploiting a well-known weakness in how algorithms are
evaluated in the black-box scenario: Algorithms are only charged for
making fitness queries, and not for offline computation. As the agent
shows, the black-box complexity of NP-hard problems is polynomial.
On \bandwidthvc with the same flat fitness function (row 11,
\Cref{app:run-bvc}), the agent used the same strategy but exploited the
bandwidth: it probes only the vertex pairs within distance $k+1$,
giving a runtime bound of $O(kn)$. In both runs, the minimum cover is
defined non-constructively in Lean, so the computation between queries
is not bounded.

With the graded fitness function~\eqref{eq:vertex-cover}, \bandwidthvc
proved harder (rows 12--14). Two runs with Fable 5.1 designed
asymmetric mutation operators but did not complete the pipeline: one
stopped at the mass bound of Stage~10h, which is false for its design
(\Cref{app:run-bvc2}), and the other at Stage~10c
(\Cref{app:run-bvc-2nd}). The run with Opus 5.5 completed with a
window-repair random search (\Cref{app:run-bvc-opus}), whose bound
$\Theta(2^n)$ improves on the random search baseline only by a constant
factor.

\subsection{Sub-modular minimisation}
We now turn to the problem of minimising sub-modular functions (SFM),
a problem with important applications in machine learning.  We
represent any subset $S\subseteq [n]$ as a bitstring $x\in \{0,1\}^n$
in the natural way by setting $x_i=1$ if and only if $i\in S$. We now
call a pseudo-Boolean function $f : \{0,1\}^n\rightarrow\R$ submodular
if and only if for all subsets $S\subseteq T\subseteq [n]$ and
$v\not\in T$, it holds $f(S\cup\{v\})-f(S)\geq f(T\cup \{v\})-f(T)$.
We will assume that $f$ is integer-valued with range in the interval
$[0,B]$ where $B:=n^c$ for some unspecified constant $c$. There exist theoretical algorithms that minimise SFMs in
$O(n^3\log\log(n)/\log(n))$ oracle queries
\citep{jiang_minimizing_2023}, i.e., independent of $B$. However, the
more practically useful Fujishige-Wolfe algorithm has running time
$O(n^2B^2)$. We are not aware of runtime analyses of evolutionary
algorithms for general SFM minimisation, but results for special cases
such as the minimum spanning tree problem are available
\citep{neumann_randomized_2007}.

We expected this problem to be significantly more challenging than the previous problems.  We therefore added to the problem card the advice that this is a challenging problem where a naive elitist hill-climber
may not work, and where the agent needs to carefully design the
potential function and exploit submodularity. Furthermore, we ran
several generations of the pipeline. The algorithmic design and
analysis produced in a first generation of the pipeline was seeded
into the following generation, etc.
Through all generations, the agent produced the same class of
algorithm, which can be described as projected subgradient descent on
the Lov\'asz extension.  The first generation returned a runtime bound
of $12nB^2(2n+4)(n+1)=O(n^{3+2c})$. The second generation improved the
constant factor, reducing the runtime to $16n^2B^2(n+2)=O(n^{3+2c})$. The third
generation saw a factor-$n$ speed-up, with a runtime of
$32nB^2(n+1)=O(n^2B^2)$. Note that this is asymptotically the same as
the Fujishige-Wolfe algorithm. Generation four and five only brought
constant-factor improvements, leading to the approximate upper bound
$8.13\cdot B^2(n+1)^2$.

\subsection{Ablation experiments}\label{sec:results-zeroshot}

To evaluate the impact of the staged pipeline, we developed an
unstaged pipeline consisting of a single stage: Given a problem card,
the agent is prompted to construct an algorithm for the problem which
type-checks with the black-box framework, and which has better runtime
than the seeded algorithm. The agent is not given the drift analysis
API, the measurability toolkit, the proof cookbook, and most random
stream properties. In order to be able to verify the algorithmic
construction in Lean, we provided the agent with Lean-definitions of
the black-box optimisation algorithm framework, and definitions of
runtime. In order for the algorithm to be able to use the random
source, we also provided basic clauses related to sampling from the
random stream. Since the ablation only concerned the staged pipeline,
we kept the problem cards unchanged. We also provided the agent with
the definition of the seeded algorithm (random search).

For \cycleuncover, Fable 5.1 reproduced the trivial random search
algorithm (i.e., it copies the provided {\tt algorithm0} with minor
syntactic changes). Note that this failure mode would have been caught
by the {\tt NotRandomSearch} guard in the staged pipeline. Instead of
relying on drift analysis, the agent applied the tail-sum formula
$\expect{T}=\sum_{k=1}^\infty \prob{k \leq T}$ to the runtime variable
$T$. The agent recognised that the problem involved a hidden
permutation and two global optima, giving a runtime bound of $2^n/2$.

For \bandwidthvc, Fable 5.1 constructed the (1+1) EA, and proved all
formalisation properties. However, the model failed to produce a
runtime bound. The model attempted to apply a fitness-level
argument. This is one of the simplest-to-use techniques for analysing
the runtime of the (1+1) EA introduced more than two decades ago
\citep{wegener_methods_2002}, but it has limited applicability to
more complex algorithms and fitness landscapes.

We then investigated weaker models. Claude Sonnet (row 17) passed Stages 0--10g on
\cycleuncover, but needed between 54 and about 1{,}000 attempts per stage, and it did not pass the
mass bound of Stage~10h, because its Good set is empty at some reachable states
(\Cref{app:run-laminate-sonnet}). The open-weight models, Goedel-Prover-V2-32B for the formal stages
and Gemma~4 26B-A4B for the prose stages (vLLM backend, 256 samples per attempt), did not complete
the pipeline on \onemax in any of 14 recent runs (\Cref{app:run-open}). Rows 19--21 of
\Cref{tab:runs} are representative. Two runs stop at \texttt{NotRandomSearch} (Stage~4) after 13 and
24 batches of 256 samples, although the statement is true for their PBIL-like designs. The third
passes Stage~4 only through a loophole of an earlier pipeline version, since its algorithm is
random search, and then stops at \texttt{PotentialFineGrained} (Stage~9d), again on a true statement.
Frontier models stop, if at all, at a drift stage, whereas the open-weight models stop much earlier
(\Cref{app:stage-stats}), and they generate 25--124M output tokens per run, about two orders of
magnitude more than the frontier-model runs.

\section{Conclusion}

We have presented a staged pipeline in which an agent designs black-box optimisation algorithms and
proves their expected runtime in Lean, with no empirical evaluation. Across 8 problem classes, 12 of
15 runs with frontier models completed the pipeline. Nine of these give a polynomial bound for an
algorithm whose computation between queries is explicit; of the other three, two learn a
\vertexcover instance with polynomially many queries and then solve it offline, at no cost in the
black-box model, and one improves on random search only by a constant factor. The agent designed an $O(n^2)$ algorithm for
\cycleuncover, on which the (1+1) EA needs $e^{\Omega(n)}$, matched the $O(n^2)$ runtime of classical evolutionary algorithms on
\leadingones, and rediscovered projected subgradient descent on the Lov\'asz extension for
integer-valued submodular minimisation, improving its own bound to $O(n^{2+2c})$ over five
generations. Without the staging, or with open-weight models, the pipeline did not produce efficient
algorithms. The bounds are not yet state of the art; richer drift theorems, more iterations of the
design loop, and continuous search spaces are natural next steps.

\paragraph{Limitations.} Only frontier models completed the
pipeline. A more complete experimental design allowing for a
statistical analysis was not feasible due to the high cost of the
runs.  The agent was only provided with a simplified drift theorem
(Corollary~\ref{corollary:simple-drift}). With standard potential
functions, this theorem makes more precise variable drift analyses
difficult, leading to non-tight runtime bounds such as $O(n^3)$ on
\onemax for (1+1) EA variants. Furthermore, the search space is
restricted to bitstrings.

\newpage

\section*{Acknowledgments}
This research was partly supported by a Turing AI Fellowship (EPSRC
grant ref EP/V025562/1).  Calculations were performed using the
Sulis Tier 2 HPC platform hosted by the Scientific Computing
Research Technology Platform at the University of Warwick. Sulis is
funded by EPSRC Grant EP/T022108/1 and the HPC Midlands+
consortium. The author also acknowledges the use of resources
provided by the Isambard-AI National AI Research Resource
(AIRR). Isambard-AI is operated by the University of Bristol and is
funded by the UK Government’s Department for Science, Innovation and
Technology (DSIT) via UK Research and Innovation; and the Science
and Technology Facilities Council [ST/AIRR/I-A-I/1023].

\section*{AI Use Statement}
This paper addresses the scientific question of whether large language models can generate
algorithms together with proofs of their efficiency, so LLMs were used extensively as the object of
study. All synthesised algorithms, potential functions, drift certificates and their Lean proofs
reported in \Cref{tab:runs} were generated by the models named there (Claude Opus~5, Opus~5.5,
Fable~5.1 and Sonnet; Goedel-Prover-V2-32B and Gemma~4), and every proof was checked by Lean. In
addition, LLMs were used to help derive the combinatorial drift part of the Lean library used by the
pipeline (\Cref{sec:runtime-analysis}), whose results are likewise checked by Lean; to translate the Lean definitions of the synthesised algorithms into the human-readable
pseudo-code of \Cref{app:runs}, and to collect the potentials and constants reported there, all of
which we checked against the Lean artefacts; and to polish the writing. The author takes full responsibility for the
content of the paper.

\bibliographystyle{iclr2027_conference}
\bibliography{references}

\begin{thebibliography}{43}
\providecommand{\natexlab}[1]{#1}
\providecommand{\url}[1]{\texttt{#1}}
\expandafter\ifx\csname urlstyle\endcsname\relax
  \providecommand{\doi}[1]{doi: #1}\else
  \providecommand{\doi}{doi: \begingroup \urlstyle{rm}\Url}\fi

\bibitem[Baluja(1994)]{pbil1994}
Shumeet Baluja.
\newblock Population-{Based} {Incremental} {Learning}: {A} {Method} for
  {Integrating} {Genetic} {Search} {Based} {Function} {Optimization} and
  {Competitive} {Learning}.
\newblock Technical Report CMU-CS-94-163, School of Computer Science, Carnegie
  Mellon University, Pittsburgh, PA, June 1994.
\newblock URL \url{https://apps.dtic.mil/sti/pdfs/ADA282654.pdf}.

\bibitem[Bentkamp et~al.(2019)Bentkamp, Blanchette, and
  Klakow]{bentkamp_formal_2019}
Alexander Bentkamp, Jasmin~Christian Blanchette, and Dietrich Klakow.
\newblock A {Formal} {Proof} of the {Expressiveness} of {Deep} {Learning}.
\newblock \emph{Journal of Automated Reasoning}, 63\penalty0 (2):\penalty0
  347--368, August 2019.
\newblock ISSN 0168-7433, 1573-0670.
\newblock \doi{10.1007/s10817-018-9481-5}.
\newblock URL \url{http://link.springer.com/10.1007/s10817-018-9481-5}.

\bibitem[Buhrman \& de~Wolf(2002)Buhrman and de~Wolf]{buhrman_complexity_2002}
Harry Buhrman and Ronald de~Wolf.
\newblock Complexity measures and decision tree complexity: a survey.
\newblock \emph{Theoretical Computer Science}, 288\penalty0 (1):\penalty0
  21--43, October 2002.
\newblock ISSN 0304-3975.
\newblock \doi{10.1016/S0304-3975(01)00144-X}.
\newblock URL
  \url{https://www.sciencedirect.com/science/article/pii/S030439750100144X}.

\bibitem[Bursuc et~al.(2025)Bursuc, Ehrenborg, Lin, Astefanoaei, Chiosa,
  Kukovec, Singh, Butterley, Bizid, Dougherty, Zhao, Tan, and
  Tegmark]{bursuc_benchmark_2025}
Sergiu Bursuc, Theodore Ehrenborg, Shaowei Lin, Lacramioara Astefanoaei,
  Ionel~Emilian Chiosa, Jure Kukovec, Alok Singh, Oliver Butterley, Adem Bizid,
  Quinn Dougherty, Miranda Zhao, Max Tan, and Max Tegmark.
\newblock A benchmark for vericoding: formally verified program synthesis,
  September 2025.
\newblock URL \url{https://arxiv.org/abs/2509.22908v1}.

\bibitem[Buzzard(2024)]{buzzard_mathematical_2024}
Kevin Buzzard.
\newblock Mathematical reasoning and the computer.
\newblock \emph{Bulletin of the American Mathematical Society}, 61\penalty0
  (2):\penalty0 211--224, February 2024.
\newblock ISSN 1088-9485, 0273-0979.
\newblock \doi{10.1090/bull/1833}.
\newblock URL
  \url{https://www.ams.org/bull/2024-61-02/S0273-0979-2024-01833-3/}.

\bibitem[Böttcher et~al.(2010)Böttcher, Doerr, and
  Neumann]{schaefer_optimal_2010}
Süntje Böttcher, Benjamin Doerr, and Frank Neumann.
\newblock Optimal {Fixed} and {Adaptive} {Mutation} {Rates} for the
  {LeadingOnes} {Problem}.
\newblock In Robert Schaefer, Carlos Cotta, Joanna Kołodziej, and Günter
  Rudolph (eds.), \emph{Parallel {Problem} {Solving} from {Nature}, {PPSN}
  {XI}}, pp.\  1--10. Springer Berlin Heidelberg, Berlin, Heidelberg, 2010.
\newblock ISBN 978-3-642-15843-8 978-3-642-15844-5.
\newblock \doi{10.1007/978-3-642-15844-5_1}.
\newblock URL \url{http://link.springer.com/10.1007/978-3-642-15844-5_1}.

\bibitem[Cohen et~al.(2016)Cohen, Sharir, and Shashua]{cohen_expressive}
Nadav Cohen, Or~Sharir, and Amnon Shashua.
\newblock On the {Expressive} {Power} of {Deep} {Learning}: {A} {Tensor}
  {Analysis}.
\newblock \emph{Journal of Machine Learning Research}, 49:\penalty0 1--31,
  2016.

\bibitem[Corus et~al.(2018)Corus, Dang, Eremeev, and
  Lehre]{corus_level-based_2018}
Dogan Corus, Duc-Cuong Dang, Anton~V. Eremeev, and Per~Kristian Lehre.
\newblock Level-{Based} {Analysis} of {Genetic} {Algorithms} and {Other}
  {Search} {Processes}.
\newblock \emph{IEEE Transactions on Evolutionary Computation}, 22\penalty0
  (5):\penalty0 707--719, October 2018.
\newblock ISSN 1089-778X, 1089-778X, 1941-0026.
\newblock \doi{10.1109/TEVC.2017.2753538}.
\newblock URL \url{https://ieeexplore.ieee.org/document/8039236/}.

\bibitem[Doerr et~al.(2012)Doerr, Johannsen, and
  Winzen]{doerr_multiplicative_2012}
Benjamin Doerr, Daniel Johannsen, and Carola Winzen.
\newblock Multiplicative {Drift} {Analysis}.
\newblock \emph{Algorithmica}, 64\penalty0 (4):\penalty0 673--697, December
  2012.
\newblock ISSN 1432-0541.
\newblock \doi{10.1007/s00453-012-9622-x}.
\newblock URL \url{https://doi.org/10.1007/s00453-012-9622-x}.

\bibitem[Dos~Santos et~al.(2025)Dos~Santos, de~Saxcé, Wang, Wang, Baksys,
  Unsal, Liu, Liu, and Li]{dos_santos_kimina_2025}
Marco Dos~Santos, Hugues de~Saxcé, Haiming Wang, Ran Wang, Mantas Baksys, Mert
  Unsal, Junqi Liu, Zhengying Liu, and Jia Li.
\newblock Kimina {Lean} {Server}: A high-performance {Lean} server for
  large-scale verification.
\newblock arXiv:2504.21230, 2025.

\bibitem[Droste et~al.(2002)Droste, Jansen, and Wegener]{droste_analysis_2002}
Stefan Droste, Thomas Jansen, and Ingo Wegener.
\newblock On the analysis of the (1+1) evolutionary algorithm.
\newblock \emph{Theoretical Computer Science}, 276\penalty0 (1--2):\penalty0
  51--81, 2002.

\bibitem[Droste et~al.(2006)Droste, Jansen, and Wegener]{droste_upper_2006}
Stefan Droste, Thomas Jansen, and Ingo Wegener.
\newblock Upper and {Lower} {Bounds} for {Randomized} {Search} {Heuristics} in
  {Black}-{Box} {Optimization}.
\newblock \emph{Theory of Computing Systems}, 39\penalty0 (4):\penalty0
  525--544, July 2006.
\newblock ISSN 1433-0490.
\newblock \doi{10.1007/s00224-004-1177-z}.
\newblock URL \url{https://doi.org/10.1007/s00224-004-1177-z}.

\bibitem[Eberl et~al.(2020)Eberl, Haslbeck, and Nipkow]{eberl_verified_2020}
Manuel Eberl, Max~W. Haslbeck, and Tobias Nipkow.
\newblock Verified {Analysis} of {Random} {Binary} {Tree} {Structures}.
\newblock \emph{Journal of Automated Reasoning}, 64\penalty0 (5):\penalty0
  879--910, June 2020.
\newblock ISSN 1573-0670.
\newblock \doi{10.1007/s10817-020-09545-0}.
\newblock URL \url{https://doi.org/10.1007/s10817-020-09545-0}.

\bibitem[Fawzi et~al.(2022)Fawzi, Balog, Huang, Hubert, Romera-Paredes,
  Barekatain, Novikov, R.~Ruiz, Schrittwieser, Swirszcz, Silver, Hassabis, and
  Kohli]{fawzi_discovering_2022}
Alhussein Fawzi, Matej Balog, Aja Huang, Thomas Hubert, Bernardino
  Romera-Paredes, Mohammadamin Barekatain, Alexander Novikov, Francisco~J.
  R.~Ruiz, Julian Schrittwieser, Grzegorz Swirszcz, David Silver, Demis
  Hassabis, and Pushmeet Kohli.
\newblock Discovering faster matrix multiplication algorithms with
  reinforcement learning.
\newblock \emph{Nature}, 610\penalty0 (7930):\penalty0 47--53, October 2022.
\newblock ISSN 1476-4687.
\newblock \doi{10.1038/s41586-022-05172-4}.
\newblock URL \url{https://www.nature.com/articles/s41586-022-05172-4}.

\bibitem[Friedrich et~al.(2010)Friedrich, He, Hebbinghaus, Neumann, and
  Witt]{friedrich_approximating_2010}
Tobias Friedrich, Jun He, Nils Hebbinghaus, Frank Neumann, and Carsten Witt.
\newblock Approximating {Covering} {Problems} by {Randomized} {Search}
  {Heuristics} {Using} {Multi}-{Objective} {Models}.
\newblock \emph{Evolutionary Computation}, 18\penalty0 (4):\penalty0 617--633,
  December 2010.
\newblock ISSN 1063-6560.
\newblock \doi{10.1162/EVCO_a_00003}.
\newblock URL \url{https://doi.org/10.1162/EVCO_a_00003}.

\bibitem[{Gemma Team}(2026)]{gemma_team_gemma_2026}
{Gemma Team}.
\newblock Gemma 4 {Technical} {Report}, July 2026.
\newblock URL \url{http://arxiv.org/abs/2607.02770}.
\newblock arXiv:2607.02770 [cs.CL].

\bibitem[Hajek(1982)]{hajek_hitting-time_1982}
Bruce Hajek.
\newblock Hitting-{Time} and {Occupation}-{Time} {Bounds} {Implied} by {Drift}
  {Analysis} with {Applications}.
\newblock \emph{Advances in Applied Probability}, 14\penalty0 (3):\penalty0
  502--525, 1982.
\newblock ISSN 0001-8678.
\newblock \doi{10.2307/1426671}.
\newblock URL \url{https://www.jstor.org/stable/1426671}.
\newblock Publisher: Applied Probability Trust.

\bibitem[He \& Yao(2001)He and Yao]{he_drift_2001}
Jun He and Xin Yao.
\newblock Drift analysis and average time complexity of evolutionary
  algorithms.
\newblock \emph{Artificial Intelligence}, 127\penalty0 (1):\penalty0 57--85,
  March 2001.
\newblock ISSN 00043702.
\newblock \doi{10.1016/S0004-3702(01)00058-3}.
\newblock URL
  \url{http://linkinghub.elsevier.com/retrieve/pii/S0004370201000583}.

\bibitem[Jiang(2023)]{jiang_minimizing_2023}
Haotian Jiang.
\newblock Minimizing {Convex} {Functions} with {Rational} {Minimizers}.
\newblock \emph{Journal of the ACM}, 70\penalty0 (1):\penalty0 1--27, February
  2023.
\newblock ISSN 0004-5411, 1557-735X.
\newblock \doi{10.1145/3566050}.
\newblock URL \url{https://dl.acm.org/doi/10.1145/3566050}.

\bibitem[Kaminski et~al.(2017)Kaminski, Katoen, Matheja, and
  Olmedo]{kaminski_weakest_2017}
Benjamin~Lucien Kaminski, Joost-Pieter Katoen, Christoph Matheja, and Federico
  Olmedo.
\newblock Weakest {Precondition} {Reasoning} for {Expected} {Run}-{Times} of
  {Probabilistic} {Programs}, November 2017.
\newblock URL \url{http://arxiv.org/abs/1601.01001}.
\newblock arXiv:1601.01001 [cs.LO].

\bibitem[Knuth(1997)]{KnuthVol1}
Donald~E. Knuth.
\newblock \emph{The art of computer programming, volume 1 (3rd ed.):
  fundamental algorithms}.
\newblock Addison Wesley Longman Publishing Co., Inc., USA, 1997.
\newblock ISBN 0201896834.

\bibitem[Kratsch \& Neumann(2013)Kratsch and
  Neumann]{kratsch_fixed-parameter_2013}
Stefan Kratsch and Frank Neumann.
\newblock Fixed-{Parameter} {Evolutionary} {Algorithms} and the {Vertex}
  {Cover} {Problem}.
\newblock \emph{Algorithmica}, 65\penalty0 (4):\penalty0 754--771, April 2013.
\newblock ISSN 1432-0541.
\newblock \doi{10.1007/s00453-012-9660-4}.
\newblock URL \url{https://doi.org/10.1007/s00453-012-9660-4}.

\bibitem[Kwon et~al.(2023)Kwon, Li, Zhuang, Sheng, Zheng, Yu, Gonzalez, Zhang,
  and Stoica]{kwon_efficient_2023}
Woosuk Kwon, Zhuohan Li, Siyuan Zhuang, Ying Sheng, Lianmin Zheng, Cody~Hao Yu,
  Joseph~E. Gonzalez, Hao Zhang, and Ion Stoica.
\newblock Efficient memory management for large language model serving with
  {PagedAttention}.
\newblock In \emph{Proceedings of the 29th Symposium on Operating Systems
  Principles ({SOSP})}, pp.\  611--626, 2023.

\bibitem[Lange et~al.(2025)Lange, Imajuku, and Cetin]{lange_shinkaevolve_2025}
Robert~Tjarko Lange, Yuki Imajuku, and Edoardo Cetin.
\newblock {ShinkaEvolve}: {Towards} {Open}-{Ended} {And} {Sample}-{Efficient}
  {Program} {Evolution}, September 2025.
\newblock URL \url{http://arxiv.org/abs/2509.19349}.
\newblock arXiv:2509.19349 [cs].

\bibitem[Li et~al.(2024)Li, Wang, He, Wu, Xu, and Wen]{li_formalization_2024}
Chenyi Li, Ziyu Wang, Wanyi He, Yuxuan Wu, Shengyang Xu, and Zaiwen Wen.
\newblock Formalization of {Complexity} {Analysis} of the {First}-order
  {Algorithms} for {Convex} {Optimization}, July 2024.
\newblock URL \url{http://arxiv.org/abs/2403.11437}.
\newblock arXiv:2403.11437 [math].

\bibitem[Li et~al.(2026)Li, Yang, He, Zhao, Zhao, Tang, Yang, Gupta, Su, and
  Jin]{li_goedel-code-prover_2026}
Zenan Li, Ziran Yang, Deyuan He, Haoyu Zhao, Andrew Zhao, Shange Tang, Kaiyu
  Yang, Aarti Gupta, Zhendong Su, and Chi Jin.
\newblock Goedel-{Code}-{Prover}: {Hierarchical} {Proof} {Search} for {Open}
  {State}-of-the-{Art} {Code} {Verification}, March 2026.
\newblock URL \url{http://arxiv.org/abs/2603.19329}.
\newblock arXiv:2603.19329 [cs.SE].

\bibitem[Lin et~al.(2025)Lin, Tang, Lyu, Yang, Chung, Zhao, Jiang, Geng, Ge,
  Sun, Wu, Gesi, Lu, Acuna, Yang, Lin, Choi, Chen, Arora, and
  Jin]{lin_goedel-prover-v2_2025}
Yong Lin, Shange Tang, Bohan Lyu, Ziran Yang, Jui-Hui Chung, Haoyu Zhao, Lai
  Jiang, Yihan Geng, Jiawei Ge, Jingruo Sun, Jiayun Wu, Jiri Gesi, Ximing Lu,
  David Acuna, Kaiyu Yang, Hongzhou Lin, Yejin Choi, Danqi Chen, Sanjeev Arora,
  and Chi Jin.
\newblock Goedel-{Prover}-{V2}: {Scaling} {Formal} {Theorem} {Proving} with
  {Scaffolded} {Data} {Synthesis} and {Self}-{Correction}, August 2025.
\newblock URL \url{http://arxiv.org/abs/2508.03613}.
\newblock arXiv:2508.03613 [cs.CL].

\bibitem[Liu et~al.(2024)Liu, Tong, Yuan, Lin, Luo, Wang, Lu, and
  Zhang]{liu_evolution_2024}
Fei Liu, Xialiang Tong, Mingxuan Yuan, Xi~Lin, Fu~Luo, Zhenkun Wang, Zhichao
  Lu, and Qingfu Zhang.
\newblock Evolution of {Heuristics}: {Towards} {Efficient} {Automatic}
  {Algorithm} {Design} {Using} {Large} {Language} {Model}.
\newblock In \emph{Proceedings of the 41st {International} {Conference} on
  {Machine} {Learning}}, pp.\  32201--32223. PMLR, July 2024.
\newblock URL \url{https://proceedings.mlr.press/v235/liu24bs.html}.

\bibitem[Liu et~al.(2026)Liu, Yao, Guo, Yang, Lin, Zhao, Tong, Mao, Lu, Wang,
  Yuan, and Zhang]{liu_systematic_2026}
Fei Liu, Yiming Yao, Ping Guo, Zhiyuan Yang, Xi~Lin, Zhe Zhao, Xialiang Tong,
  Kun Mao, Zhichao Lu, Zhenkun Wang, Mingxuan Yuan, and Qingfu Zhang.
\newblock A {Systematic} {Survey} on {Large} {Language} {Models} for
  {Algorithm} {Design}.
\newblock \emph{ACM Computing Surveys}, 58\penalty0 (8):\penalty0 1--32, June
  2026.
\newblock ISSN 0360-0300, 1557-7341.
\newblock \doi{10.1145/3787585}.
\newblock URL \url{https://dl.acm.org/doi/10.1145/3787585}.

\bibitem[Luby et~al.(1993)Luby, Sinclair, and Zuckerman]{luby_optimal_1993}
Michael Luby, Alistair Sinclair, and David Zuckerman.
\newblock Optimal speedup of {Las} {Vegas} algorithms.
\newblock \emph{Information Processing Letters}, 47\penalty0 (4):\penalty0
  173--180, 1993.
\newblock ISSN 0020-0190.
\newblock \doi{https://doi.org/10.1016/0020-0190(93)90029-9}.
\newblock URL
  \url{https://www.sciencedirect.com/science/article/pii/0020019093900299}.

\bibitem[Motwani \& Raghavan(1995)Motwani and Raghavan]{Motwani_Raghavan_1995}
Rajeev Motwani and Prabhakar Raghavan.
\newblock \emph{Randomized Algorithms}.
\newblock Cambridge University Press, 1995.

\bibitem[Neumann \& Wegener(2007)Neumann and Wegener]{neumann_randomized_2007}
Frank Neumann and Ingo Wegener.
\newblock Randomized local search, evolutionary algorithms, and the minimum
  spanning tree problem.
\newblock \emph{Theoretical Computer Science}, 378\penalty0 (1):\penalty0
  32--40, June 2007.
\newblock ISSN 0304-3975.
\newblock \doi{10.1016/j.tcs.2006.11.002}.
\newblock URL
  \url{https://www.sciencedirect.com/science/article/pii/S0304397506008589}.

\bibitem[Novikov et~al.(2025)Novikov, Vũ, Eisenberger, Dupont, Huang, Wagner,
  Shirobokov, Kozlovskii, Ruiz, Mehrabian, Kumar, See, Chaudhuri, Holland,
  Davies, Nowozin, Kohli, and Balog]{novikov_alphaevolve_2025}
Alexander Novikov, Ngân Vũ, Marvin Eisenberger, Emilien Dupont, Po-Sen Huang,
  Adam~Zsolt Wagner, Sergey Shirobokov, Borislav Kozlovskii, Francisco J.~R.
  Ruiz, Abbas Mehrabian, M.~Pawan Kumar, Abigail See, Swarat Chaudhuri, George
  Holland, Alex Davies, Sebastian Nowozin, Pushmeet Kohli, and Matej Balog.
\newblock {AlphaEvolve}: {A} coding agent for scientific and algorithmic
  discovery, June 2025.
\newblock URL \url{http://arxiv.org/abs/2506.13131}.
\newblock arXiv:2506.13131 [cs].

\bibitem[Oliveto et~al.(2008)Oliveto, He, and Yao]{oliveto_analysis_2008}
Pietro~S. Oliveto, Jun He, and Xin Yao.
\newblock Analysis of population-based evolutionary algorithms for the vertex
  cover problem.
\newblock In \emph{2008 {IEEE} {Congress} on {Evolutionary} {Computation}
  ({IEEE} {World} {Congress} on {Computational} {Intelligence})}, pp.\
  1563--1570, June 2008.
\newblock \doi{10.1109/CEC.2008.4631000}.
\newblock URL \url{https://ieeexplore.ieee.org/abstract/document/4631000}.

\bibitem[Ospanov et~al.(2025)Ospanov, Farnia, and
  Yousefzadeh]{ospanov_apollo_2025}
Azim Ospanov, Farzan Farnia, and Roozbeh Yousefzadeh.
\newblock {APOLLO}: {Automated} {LLM} and {Lean} {Collaboration} for {Advanced}
  {Formal} {Reasoning}, May 2025.
\newblock URL \url{https://arxiv.org/abs/2505.05758v5}.

\bibitem[Robbins et~al.(2026)Robbins, Lawless, Udell, and
  Vitercik]{robbins_flare_2026}
Henry Robbins, Connor Lawless, Madeleine Udell, and Ellen Vitercik.
\newblock {FLARE}: {Verifying} {MILP} {Reformulations} with {LLM}-{Based}
  {Theorem} {Proving}, August 2026.
\newblock URL \url{http://arxiv.org/abs/2608.25220}.
\newblock arXiv:2608.25220 [cs.AI].

\bibitem[Romera-Paredes et~al.(2024)Romera-Paredes, Barekatain, Novikov, Balog,
  Kumar, Dupont, Ruiz, Ellenberg, Wang, Fawzi, Kohli, and
  Fawzi]{romeraparedes_mathematical_2024}
Bernardino Romera-Paredes, Mohammadamin Barekatain, Alexander Novikov, Matej
  Balog, M.~Pawan Kumar, Emilien Dupont, Francisco J.~R. Ruiz, Jordan~S.
  Ellenberg, Pengming Wang, Omar Fawzi, Pushmeet Kohli, and Alhussein Fawzi.
\newblock Mathematical discoveries from program search with large language
  models.
\newblock \emph{Nature}, 625\penalty0 (7995):\penalty0 468--475, January 2024.
\newblock \doi{10.1038/s41586-023-06924-6}.

\bibitem[Sch\"affeler \& Abdulaziz(2025)Sch\"affeler and
  Abdulaziz]{schaffeler_formally_2025}
Maximilian Sch\"affeler and Mohammad Abdulaziz.
\newblock Formally {Verified} {Approximate} {Policy} {Iteration}.
\newblock \emph{Proceedings of the AAAI Conference on Artificial Intelligence},
  39\penalty0 (25):\penalty0 26659--26667, April 2025.
\newblock ISSN 2374-3468.
\newblock \doi{10.1609/aaai.v39i25.34868}.
\newblock URL \url{https://ojs.aaai.org/index.php/AAAI/article/view/34868}.
\newblock Number: 25.

\bibitem[Sothanaphan(2026)]{sothanaphan_resolution_2026}
Nat Sothanaphan.
\newblock Resolution of {Erdős} {Problem} \#728: a writeup of {Aristotle}'s
  {Lean} proof, January 2026.
\newblock URL \url{http://arxiv.org/abs/2601.07421}.
\newblock arXiv:2601.07421 [math.NT].

\bibitem[Vajjha et~al.(2021)Vajjha, Shinnar, Trager, Pestun, and
  Fulton]{vajjha_certrl_2021}
Koundinya Vajjha, Avraham Shinnar, Barry Trager, Vasily Pestun, and Nathan
  Fulton.
\newblock {CertRL}: formalizing convergence proofs for value and policy
  iteration in {Coq}.
\newblock In \emph{Proceedings of the 10th {ACM} {SIGPLAN} {International}
  {Conference} on {Certified} {Programs} and {Proofs}}, {CPP} 2021, pp.\
  18--31, New York, NY, USA, January 2021. Association for Computing Machinery.
\newblock ISBN 978-1-4503-8299-1.
\newblock \doi{10.1145/3437992.3439927}.
\newblock URL \url{https://dl.acm.org/doi/10.1145/3437992.3439927}.

\bibitem[{van Stein} \& Bäck(2025){van Stein} and Bäck]{stein_llamea_2025}
Niki {van Stein} and Thomas Bäck.
\newblock {LLaMEA}: {A} {Large} {Language} {Model} {Evolutionary} {Algorithm}
  for {Automatically} {Generating} {Metaheuristics}.
\newblock \emph{IEEE Transactions on Evolutionary Computation}, 29\penalty0
  (2):\penalty0 331--345, April 2025.
\newblock ISSN 1941-0026.
\newblock \doi{10.1109/TEVC.2024.3497793}.
\newblock URL \url{https://ieeexplore.ieee.org/abstract/document/10752628}.

\bibitem[Varambally et~al.(2025)Varambally, Voice, Sun, Chen, Yu, and
  Ye]{varambally_hilbert_2025}
Sumanth Varambally, Thomas Voice, Yanchao Sun, Zhifeng Chen, Rose Yu, and
  Ke~Ye.
\newblock Hilbert: {Recursively} {Building} {Formal} {Proofs} with {Informal}
  {Reasoning}, September 2025.
\newblock URL \url{https://arxiv.org/abs/2509.22819v2}.

\bibitem[Wegener(2002)]{wegener_methods_2002}
Ingo Wegener.
\newblock Methods for the {Analysis} of {Evolutionary} {Algorithms} on
  {Pseudo}-{Boolean} {Functions}.
\newblock In Ruhul Sarker, Masoud Mohammadian, and Xin Yao (eds.),
  \emph{Evolutionary {Optimization}}, pp.\  349--369. Springer US, Boston, MA,
  2002.
\newblock ISBN 978-0-306-48041-6.
\newblock \doi{10.1007/0-306-48041-7_14}.
\newblock URL \url{https://doi.org/10.1007/0-306-48041-7_14}.

\end{thebibliography}

\newpage
\appendix
\raggedbottom

\section{Formal Statement of the Pipeline Stages}\label{app:pipeline-stages}

\Cref{tab:pipeline-stages} states every stage of the pipeline formally;
the stages are described informally in \Cref{sec:pipeline}.

\begin{footnotesize}
\begin{longtable}{@{}p{0.9cm}p{3.0cm}p{9.0cm}@{}}
  \caption{The stages of the pipeline that
  generates one algorithm $A^{i+1}=(\Y,\init,\step,\query)$ and proves it
  faster than its predecessor, restated in the $(\X,\Y,\Z,\R)$ formalism of \Cref{algo:blackbox}. \texttt{[I]}
    is an informal stage producing prose (no Lean code) with candidate selection (see \Cref{sec:pipeline-algorithm} and \Cref{sec:pipeline-potential}), \texttt{[F*]} is a formal stage that introduces new Lean definitions used by later stages,
    and \texttt{[F-]} a formal stage that only proves properties of definitions introduced earlier;
    in both, the Lean declarations produced are checked by Lean and carried forward. $f\in\F$ denotes a fixed fitness function
    (an element of the problem class, e.g.\ $\onemax$ composed with an unknown target
    bitstring), so that $\reachableWith(f,y)$ (as opposed to $\reachable(y)$) restricts to states
    reachable when the \emph{true} fitness fed to $\step$ is always $f$. $z,z_1,z_2\in\Z$ denote
    random draws; $\unif(z)\in\X$ is the bitstring obtained by sampling each coordinate of $z$
    against threshold $1/2$. $n$ is the bitstring length, and a `prev' subscript refers to the
    previous algorithm/potential pair in the outer self-improvement loop.}
  \label{tab:pipeline-stages}\\
  \toprule
  Stage & Name & Formalisation\\
  \midrule
  \endfirsthead
  \multicolumn{3}{l}{\emph{(Table \thetable{} continued)}}\\
  \toprule
  Stage & Name & Formalisation\\
  \midrule
  \endhead
  \bottomrule
  \endfoot
  \bottomrule
  \endlastfoot

  \multicolumn{3}{@{}l}{\textbf{Algorithm construction (Stages 0--8)}}\\[2pt]

  \texttt{0} & \texttt{DesignLock} \newline\texttt{[I]}
    & Commit to $\Y,\init,\step,\query$ and a qualitative account of which component of
      $y\in\Y$ tracks $f$; no numeric constants yet.\\[4pt]

  \texttt{1} & \texttt{Construct-}\newline\texttt{AlgorithmCom-}\newline\texttt{ponents [F*]}
    & Define $\Y$, $\init:\Z\to\Y$, $\step:\Z\times\Y\times\R\to\Y$, $\query:\Y\to\X$.\\[4pt]

  \texttt{1c} & \texttt{StepMeasur-}\newline\texttt{ability [F-]}
    & $\step$ is measurable.\\[4pt]

  \texttt{2} & \texttt{Completeness1}\newline\texttt{[F-]}
    & $\forall x\in\X,\forall y\in\Y,\ \reachable(y)\rightarrow
       \exists z\in\Z,\exists f\in\R,\ \query(\step(z,y,f))=x$.\\[4pt]

  \texttt{3} & \texttt{NonAbsorbing}\newline\texttt{[F-]}
    & $\forall y\in\Y,\ \reachable(y)\rightarrow\forall f\in\R,\
       \exists z_1,z_2\in\Z,\ \query(\step(z_1,y,f))\neq\query(\step(z_2,y,f))$.\\[4pt]

  \texttt{4} & \texttt{NotRandom-}\newline\texttt{Search [F-]}
    & $\exists y\in\Y,\ \reachable(y)\wedge\exists z\in\Z,\exists f\in\R,\
       \query(\step(z,y,f))\neq\unif(z)$.\\[4pt]

  \texttt{5} & \texttt{Adaptiveness}\newline\texttt{[F-]}
    & $\exists y\in\Y,\ \reachable(y)\wedge\exists z,z'\in\Z,\exists f_1,f_2,f\in\R,$\newline
      $\query(\step(z',\step(z,y,f_1),f))\neq\query(\step(z',\step(z,y,f_2),f))$\newline
      (a two-step strengthening of the usual one-step adaptiveness property: the fitness fed at
      the \emph{first} step must still move the query one step later).\\[4pt]

  \texttt{6} & \texttt{QueryNon-}\newline\texttt{Injectivity [F-]}
    & $\exists y_1,y_2\in\Y,\ y_1\neq y_2\wedge\query(y_1)=\query(y_2)$.\\[4pt]

  \texttt{7} & \texttt{AlgorithmInfra-}\newline\texttt{structure [F*]}
    & Equip $\Y$ with the five instances (measurable space, topology, pseudo-metrizability,
      Borel $\sigma$-algebra, second-countability) the framework requires.\\[4pt]

  \texttt{7a} & \texttt{InitMeasur-}\newline\texttt{ability [F-]}
    & $\init$ is measurable.\\[4pt]

  \texttt{7b} & \texttt{QueryMeasur-}\newline\texttt{ability [F-]}
    & $\query$ is measurable.\\[4pt]

  \texttt{8} & \texttt{Construct-}\newline\texttt{AlgorithmStruc-}\newline\texttt{ture [F*]}
    & Package $A=(\Y,\init,\step,\query)$, together with the Stage 1c/7/7a/7b witnesses, into a
      single algorithm record.\\[4pt]

  \multicolumn{3}{@{}l}{\textbf{Potential design (Stages 9a--9f)}}\\[2pt]

  \texttt{9a} & \texttt{Potential-}\newline\texttt{Design [I]}
    & Rank candidate potentials $\Phi:\F\times\Y\to[0,1]$ by whether $\step$ can be shown, even
      informally, to decrease $\Phi$ faster than $\unif$.\\[4pt]

  \texttt{9b} & \texttt{Define-}\newline\texttt{Potential [F*]}
    & Define $\Phi:\F\times\Y\to\R$ with $\forall f\in\F,\forall y\in\Y,\ 0\le \Phi(f,y)\le 1$.\\[4pt]

  \texttt{9c} & \texttt{PotentialOpt}\newline\texttt{[F-]}
    & $\forall f\in\F,\forall y\in\Y,\ \Phi(f,y)\le 0 \iff \forall x\in\X,\ f(x)\le f(\query(y))$
      --- $\Phi(f,\cdot)$ vanishes exactly at states whose query is $f$-optimal.\\[4pt]

  \texttt{9d} & \texttt{PotentialFine-}\newline\texttt{Grained [F-]}
    & $\exists f\in\F,\exists y\in\Y,\ 0<\Phi(f,y)<1$ --- rules out a $\{0,1\}$-valued (indicator)
      potential.\\[4pt]

  \texttt{9e} & \texttt{PotentialMeas-}\newline\texttt{urability [F-]}
    & $\forall f\in\F,\ \Phi(f,\cdot)$ is measurable.\\[4pt]

  \texttt{9f} & \texttt{ConstructPoten-}\newline\texttt{tialStructure}\newline\texttt{[F*]}
    & Package $\Phi$, together with the Stage 9b--9e witnesses, into a single potential
      record.\\[4pt]

  \multicolumn{3}{@{}l}{\textbf{Drift design and runtime bound (Stages 10a--10j)}}\\[2pt]

  \texttt{10a} & \texttt{DriftDesign}\newline\texttt{[I]}
    & Work out closed forms, against the now-fixed $A$ and $\Phi$, for $\varepsilon,p,p_{\text{jump}}:\N\to
      \R_{>0}$ and for the four draw sets $\Good,\Worsen,\BigWorsen,\Jump:\F\times\Y\to
      2^{\Z}$ below.\\[4pt]

  \texttt{10b} & \texttt{DefineDrift-}\newline\texttt{Constants [F*]}
    & Define $\varepsilon(n),p(n),p_{\text{jump}}(n)$ as closed-form expressions in $n$.\\[4pt]

  \texttt{10c} & \texttt{DriftConstant-}\newline\texttt{Properties [F-]}
    & $\varepsilon(n)>0,\ p(n)>0,\ p_{\text{jump}}(n)\in(0,1)$, and (the speed-up certificate)
      $\exists\delta>0,\exists n_0,\forall n>n_0,$\newline
      $(1+\delta)\,\varepsilon_{\mathrm{prev}}(n)\,p_{\mathrm{prev}}(n) <
      \varepsilon(n)\,p(n)$.\\[4pt]

  \texttt{10d} & \texttt{DefineDrift-}\newline\texttt{Sets [F*]}
    & Define $\Good,\Worsen,\BigWorsen,\Jump:\F\times\Y\to 2^{\Z}$ as explicit sets of
      draws.\\[4pt]

  \texttt{10e} & \texttt{DriftSetMeas-}\newline\texttt{urability [F-]}
    & For each $S\in\{\Good,\Worsen,\BigWorsen,\Jump\}$: $\{(y,z)\in\Y\times\Z : z\in
      S(f,y)\}$ is measurable.\\[4pt]

  \texttt{10f} & \texttt{PositiveDrift}\newline\texttt{[F-]}
    & $\forall y\in\Y,\ \reachableWith(f,y)\wedge \Phi(f,y)>0 \rightarrow \forall z\in\Good(f,y),$
      \newline
      $\Phi\bigl(f,\step(z,y,f(\query(y)))\bigr) \le \Phi(f,y)-\varepsilon(n)$.\\[4pt]

  \texttt{10g} & \texttt{NegativeDrift-}\newline\texttt{AndJump [F-]}
    & For $y$ with $\reachableWith(f,y)$, $\Phi(f,y)>0$, and every $z\in\Z$:\newline
      $\Phi(f,\step(z,y,f(\query(y))))>\Phi(f,y) \rightarrow z\in\Worsen(f,y)$,\newline
      $\Phi(f,\step(z,y,f(\query(y))))>\Phi(f,y)+\varepsilon(n) \rightarrow z\in\BigWorsen(f,y)$;
      \newline
      and, for every $y$ with $\reachableWith(f,y)$: $\forall z\in\Jump(f,y),\
      \Phi(f,\step(z,y,f(\query(y))))=0$.\\[4pt]

  \texttt{10h} & \texttt{DriftSetMass}\newline\texttt{[F-]}
    & For $\reachableWith(f,y)$ with $\Phi(f,y)>0$:
      $p(n)\le\prob{z\in\Good(f,y)}$,\newline
      $\prob{z\in\Worsen(f,y)}\le p(n)/2$,\newline
      $\prob{z\in\BigWorsen(f,y)}\le\varepsilon(n)\,p(n)/4$.\\[4pt]

  \texttt{10i} & \texttt{JumpMass}\newline\texttt{[F-]}
    & $\forall y\in\Y,\ p_{\text{jump}}(n)\le\prob{z\in\Jump(f,y)}$ --- holds at \emph{every} $y$, unlike
      Stage 10h, since it underwrites integrability of the hitting time.\\[4pt]

  \texttt{10j} & \texttt{RuntimeBound}\newline\texttt{[F-]}
    & $\expect{T_A}\ \le\ \dfrac{4}{\varepsilon(n)\,p(n)}$.\\

\end{longtable}
\end{footnotesize}

\section{Proofs of the drift condition}\label{app:drift-proofs}

The additive drift theorem used in \Cref{sec:runtime-analysis} is the following; a proof is given by
\citet{corus_level-based_2018}.

\begin{theorem}[Additive drift theorem]\label{lemma:pol-drift}
Let $(Z_t)_{t\in\mathbb{N}}$ be a discrete-time stochastic process in
$[0,\infty)$ adapted to any filtration $(\filt{t})_{t\in\mathbb{N}}$.
For any $a\geq 0$, define $T_a := \min\{t\in\mathbb{N} \mid Z_t \leq a\}$.
If for some $\varepsilon>0$ and all $t\in\mathbb{N}$: (1)~$\expect{Z_{t+1} - Z_{t} + \varepsilon \;; t<T_{a} \mid
\filt{t}} \leq 0$, (2)~$\expect{Z_t}<\infty$, and (3)~$\expect{T_a} < \infty$, then $\expect{T_a \mid \filt{0}} \leq Z_0 / \varepsilon$.
\end{theorem}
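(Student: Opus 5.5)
The plan is to build a supermartingale out of the stopped potential plus a linear time term, and then let time go to infinity.

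Write $T:=T_a$. Since $(Z_t)$ is adapted and $T$ is the first time $Z_t\le a$, the event $\{t<T\}=\bigcap_{s\le t}\{Z_s>a\}$ lies in $\filt{t}$, so $T$ is a stopping time. Define
\begin{align*}
Y_t := Z_{t\wedge T} + \varepsilon\,(t\wedge T).
\end{align*}
Then $Y_{t+1}-Y_t=\indf{t<T}\,(Z_{t+1}-Z_t+\varepsilon)$, because on $\{t\ge T\}$ both terms are frozen.

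Next I check integrability. We have $0\le Z_{t\wedge T}\le \sum_{s\le t} Z_s$, so $\expect{Z_{t\wedge T}}\le\sum_{s\le t}\expect{Z_s}<\infty$ by condition (2). Also $\varepsilon(t\wedge T)\le\varepsilon t$. Hence each $Y_t$ is integrable.

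Since $\indf{t<T}$ is $\filt{t}$-measurable, it can be pulled out of the conditional expectation. Condition (1) is exactly the statement
\begin{align*}
\expect{Y_{t+1}-Y_t\mid\filt{t}}=\expect{\indf{t<T}(Z_{t+1}-Z_t+\varepsilon)\mid\filt{t}}\le 0 .
\end{align*}
So $(Y_t)$ is a supermartingale with respect to $(\filt{t})$.

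By the tower property and induction, $\expect{Y_t\mid\filt{0}}\le Y_0=Z_0$ almost surely for every $t$. Using $Z_{t\wedge T}\ge 0$, this gives
\begin{align*}
\varepsilon\,\expect{t\wedge T\mid\filt{0}}\le Z_0-\expect{Z_{t\wedge T}\mid\filt{0}}\le Z_0 .
\end{align*}
As $t\to\infty$, $t\wedge T\uparrow T$. The conditional monotone convergence theorem then yields $\expect{T\mid\filt{0}}\le Z_0/\varepsilon$ almost surely.

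Condition (3) is not strictly needed on this route, since non-negativity of $Z$ already lets us drop the $Z_{t\wedge T}$ term. It does, however, guarantee that $T<\infty$ almost surely, so that $Z_T$ is well defined. It would also be the hypothesis to use if one instead argued via optional stopping, with $\expect{Y_T\mid\filt{0}}\le Z_0$, which needs $\expect{T}<\infty$ for uniform integrability.

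The main obstacle is the measure-theoretic bookkeeping rather than the idea. The key points are:
\begin{itemize}
\item showing $\{t<T\}\in\filt{t}$, so that the indicator can be pulled out of the conditional expectation;
\item justifying integrability of each $Y_t$ from condition (2);
\item applying conditional monotone convergence for the limit $t\to\infty$, with all inequalities holding almost surely.
\end{itemize}
The case $Z_0\le a$ is trivial, since then $T=0$ and the bound holds because $Z_0\ge0$.
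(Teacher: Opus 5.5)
Your proof is correct, and it proves slightly more than the statement asks for. The paper does not prove this theorem itself; it defers to Corus et al. Their standard martingale argument uses the same stopped process $Z_{t\wedge T_a}+\varepsilon(t\wedge T_a)$, but passes to the hitting time with the optional stopping theorem. That step is where hypothesis (3) is consumed, together with a boundedness assumption on $Z$, which the pipeline secures by normalising the potential to $[0,1]$.

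You instead keep the deterministic truncation $t\wedge T_a$, discard $\expect{Z_{t\wedge T_a}\mid\filt{0}}\ge 0$, and let $t\to\infty$ by conditional monotone convergence. This uses only nonnegativity and the integrability in (2), and it yields $\expect{T_a}\le\expect{Z_0}/\varepsilon<\infty$ as a conclusion rather than a hypothesis. In the paper's setting, this means the integrability obligation that Stage~10i (\texttt{JumpMass}) discharges is, as far as the drift theorem is concerned, not needed for the upper bound.

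What the optional-stopping formulation buys is symmetry. The same template gives the matching lower bound under the reversed drift inequality, and there (3) together with bounded increments or uniform integrability genuinely cannot be dropped.

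One correction to your side remark. $\expect{T}<\infty$ alone does not give uniform integrability; the textbook optional stopping theorem also requires bounded conditional increments. Moreover, for the $\le$ direction with a nonnegative supermartingale, Fatou's lemma already gives the inequality with no condition on $T$ at all, which is essentially your own argument.
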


We present the proof of Corollary~\ref{corollary:simple-drift}. The
Lean formalisation of this proof, which was developed as part of the
staged pipeline, follows almost exactly this proof.

\begin{proof}[Proof of Corollary~\ref{corollary:simple-drift}]
  Define
  \begin{align*}
    E_1 & := X_{t+1} \leq X_t -\varepsilon \\
    E_2 & := X_t - \varepsilon < X_{t+1} \leq X_t \\
    E_3 & := X_t < X_{t+1} \leq X_t + \varepsilon \\
    E_4 & := X_t + \varepsilon < X_{t+1}
  \end{align*}
  Note that these definitions immediately imply 
  \begin{align}
    \indf{t<T}\indfe{E_1}(X_t-X_{t+1}) & \geq \indf{t<T}\indfe{E_1}\varepsilon \label{eq:simpdrift1}\\
    \indf{t<T}\indfe{E_2}(X_t-X_{t+1}) & \geq 0 \label{eq:simpdrift2}\\
    \indf{t<T}\indfe{E_3}(X_t-X_{t+1}) & \geq -\indf{t<T}\indfe{E_3}\varepsilon \label{eq:simpdrift3}\\
    \indf{t<T}\indfe{E_4}(X_t-X_{t+1}) & \geq -\indf{t<T}\indfe{E_4}\label{eq:simpdrift4}
  \end{align}
  where the last inequality follows because $X_t\in[0,1]$ for all $t$.  
  We now have
  \begin{align*}
    \indf{t<T}(\varepsilon p/4)
    & = \indf{t<T}(\varepsilon p - \varepsilon p / 2 - \varepsilon p / 4)\\
    \intertext{by assumptions 1-3}
    & \leq \expect{\indf{t<T}\indfe{E_1}\varepsilon\mid \filt{t}}
         - \expect{\indf{t<T}\indfe{E_3\cup E_4}\varepsilon\mid \filt{t}}
         - \expect{\indf{t<T}\indfe{E_4}\mid \filt{t}}      \\
    \intertext{since $\indfe{E_3\cup E_4}\geq \indfe{E_3}$}
    & \leq \expect{\indf{t<T}\indfe{E_1}\varepsilon\mid \filt{t}}
         - \expect{\indf{t<T}\indfe{E_3}\varepsilon\mid \filt{t}}
         - \expect{\indf{t<T}\indfe{E_4}\mid \filt{t}}      \\
    \intertext{by Eqs. (\ref{eq:simpdrift1}), (\ref{eq:simpdrift3}), and (\ref{eq:simpdrift4})}
    & \leq \expect{\indf{t<T}(\indfe{E_1} + \indfe{E_3} + \indfe{E_4})(X_t-X_{t+1})\mid \filt{t}}\\
    \intertext{by Eq. (\ref{eq:simpdrift2}) and $\indfe{E_1}+\indfe{E_2}+\indfe{E_3}+\indfe{E_4}=1$}
    & \leq \expect{\indf{t<T}(X_t-X_{t+1})\mid \filt{t}}.
  \end{align*}
  The result now follows by rearranging the inequality above.
\end{proof}

\section{Implementation and prompt}\label{sec:implementation}

\paragraph{Implementation.}
The pipeline Python code was initially based on the Goedel-Prover-V2
codebase \citep{lin_goedel-prover-v2_2025}.  Every candidate
declaration is checked by Lean~4 on a Kimina Lean server
\citep{dos_santos_kimina_2025}, and the Lean errors are returned to
the model in the next attempt. The number of attempts allowed before
backtracking to a previous stage is determined by a Luby sequence
\citep{luby_optimal_1993} $L=(1,1,2,1,1,2,4,\ldots)$.  In particular, each stage has its own
copy of the sequence with unit 3, so it is allowed
$3,3,6,3,3,6,12,\ldots$ attempts on successive visits. When a stage
exhausts its allowance, the pipeline backtracks by one stage on the
first failure of that stage, and by more stages one each further
failure. In addition, the whole pipelinen is restarted from Stage~0
when $200,L_k$ attempts have been used in the $k$-th run of the
pipeline, or earlier when 100 consecutive attempts make no progress
beyond the furthest stage reached. All runs in \Cref{tab:runs} use
these parameters.

The responses to the prompts of each stage come from a
\emph{backend}. With the \emph{CLI backend}, the pipeline writes each
prompt to standard output and reads the response from standard input,
so that the pipeline can be driven by an external agent; rows 1--18 of
\Cref{tab:runs} use this backend with Claude models. With the
\emph{vLLM backend}, the pipeline serves open-weight models itself
with vLLM \citep{kwon_efficient_2023}, sampling many candidates per
attempt; rows 19--21 use this backend with Goedel-Prover-V2-32B
\citep{lin_goedel-prover-v2_2025} for the formal stages and Gemma~4
26B-A4B \citep{gemma_team_gemma_2026} for the prose stages.  The
stages, their Lean statements, and the checks are the same for both
backends.

\paragraph{The prompt.} Every stage prompt starts with the same global
block, followed by the accepted declarations so far and the task of
the stage. The global block describes the framework: the black-box
algorithm structure of \Cref{algo:blackbox}, the problem-class
interface and the baseline algorithm, whose
definitions the agent can use as a template.  It then lists an API of
library results, whose signatures are checked against the Lean build.
These include samplers that turn the random source into bitstrings,
with a fixed or a state-dependent per-bit bias, together with inverse
lemmas that construct a random input producing any given bitstring. They
also include lemmas about bitstrings, how to access elements of a bitstrings, etc.
Furthermore, a measurability toolkit provides two tactics
that decompose tuple-, \texttt{if}- and arithmetic-structured step and
potential functions into leaves, and gives the idioms that close each
kind of leaf. For the drift stages, the block describes the random
source, the reachability predicates and an assembly theorem of
Stage~10j, and it explains which properties of that theorem constrain
the design, e.g.\ that a single $\varepsilon$ and $p$ must serve every
reachable non-optimal state. The problem card
(\Cref{app:benchmarks,app:runs}) is spliced in as the only
problem-specific part. Finally, a proof cookbook collects recurring
Lean pitfalls and their fixes, such as 
\texttt{if b then} is preferred over \texttt{if}~$\neg$\,\texttt{b},
never to call \texttt{decide} on goals involving a symbolic $n$, how to show that
two state tuples differ, and to prove measurability with the toolkit
rather than with \texttt{aesop}, whose fixed search budget times out
on nested \texttt{if}s. After a failed attempt the agent sees the
failing declaration, the Lean error, its earlier attempts at the stage
and a list of facts verified in earlier stages.

\section{Theoretical benchmark functions}\label{app:benchmarks}

For $x\in\{0,1\}^n$ write $|x|_1=\sum_{i=1}^n x_i$ for the number of
ones. The three benchmark functions are
\begin{align*}
  \onemax(x) &= |x|_1 = \sum_{i=1}^n x_i,\\
  \leadingones(x) &= \sum_{i=1}^{n}\prod_{j=1}^{i} x_j,\\
  \jump_k(x) &= \begin{cases}
    k+|x|_1 & \text{if } |x|_1\le n-k \text{ or } |x|_1=n,\\
    n-|x|_1 & \text{otherwise,}
  \end{cases}
\end{align*}
so \onemax counts the ones, \leadingones counts the ones before the first zero, and \jump$_k$
behaves like \onemax except for a gap of width $k$ below the optimum $1^n$, in which the fitness
points away from the optimum. For the $(1{+}1)$ EA with mutation rate $1/n$ the expected
optimisation times are $\Theta(n\log n)$ on \onemax, $\Theta(n^2)$ on \leadingones and $\Theta(n^k)$
on \jump$_k$ for constant $k\ge2$ \citep{droste_analysis_2002}.

A black-box algorithm must not depend on the instance, so each function is turned into a class of
instances by hiding its optimum. Write $\oplus$ for the bitwise exclusive or and $\bar z$ for the
complement of $z$. Each permutation $\sigma$ of $[n]$ induces a permutation of
$\{0,1\}^n$, which we also denote by $\sigma$, defined by
$\sigma(x)_i \coloneqq x_{\sigma(i)}$ for all $i\in[n]$. The problem
cards used in our experiments define
\begin{align*}
  F_z(x) &= \onemax(x\oplus z), && z\in\{0,1\}^n,\\
  F_{z,\sigma}(x) &= \leadingones\bigl(\sigma(x\oplus z)\bigr), && z\in\{0,1\}^n,\ \sigma \text{ a permutation},\\
  F_z(x) &= \jump_k(x\oplus z), && z\in\{0,1\}^n.
\end{align*}
Each instance has the unique optimum $\bar z$. For \jump$_k$ the gap width $k\ge2$ is a constant of the
class that the agent does not know, so every proof must be uniform in $k$. Algorithms whose
mutation treats all bits alike, such as the $(1{+}1)$ EA, have the same expected optimisation time on
every instance of a class as on the underlying function.

\section{\cycleuncover: definition and runtime of the (1+1) EA}\label{app:cycle-lb}

\begin{algorithm}[ht]
    \caption{(1+1) EA}
    \begin{algorithmic}[1]
        \State $t := 0$
        \State Choose an initial bit string $x_0\in \{0,1\}^n$ uniformly at random
        \While{forever}
        \State Create $x'$ by flipping each bit in $x_t$ independently
        with probability $p$ (mutation)
        \State $x_{t+1}:= x'$ if $f(x')\geq f(x)$, and $x_{t+1}:= x_t$
        otherwise (selection)
        \State $t:=t+1$
        \EndWhile
    \end{algorithmic}
    \label{algo:oneplusone}
\end{algorithm}

This section proves \Cref{thm:oneplusone-cycleuncover}. The objective $\eval$, the count
$\countones$ and the feasibility condition $\guard$ are those of \Cref{fig:cycleuncover}: $\guard(x)$
holds exactly when the chosen vertices form an independent set of the hidden cycle, and
$\eval(x)=2h+\countones(x)$ if $\guard(x)$ and $\eval(x)=\countones(x)$ otherwise.

\begin{proof}[Proof of \Cref{thm:oneplusone-cycleuncover}]
  Without loss of generality, we assume that $\sigma$ is the
  identity-permutation.

  We condition on the event that both endpoints $u$ and $v$ of the
  first edge are 1-bits in the initial search point, and that it
  contains at least $9n/20$ 1-bits.  By a union bound and a Chernoff bound \citep{Motwani_Raghavan_1995},
  the probability of this event is at least
  $1/4+(1-e^{-\Omega(n)})-1\geq 1/5$. If this event does not happen,
  we lower bound the runtime by 0.
    
  We consider two phases, the first phase consisting of the first
  $t=3en$ iterations, and the second phase consisting of the following
  $e^n$ iterations. We consider phase 1 a failure (from the
  perspective of the analysis) if bit $u$ or $v$ is flipped during
  this phase. The probability of this failure event is upper bounded
  by $1-(1-1/n)^{2t}\leq 1-e^{-2t/(n-1)}$. If phase 1 is not a
  failure, then $\guard(x)$ is never satisfied, and the fitness is
  $\countones(x)$, i.e., the number of 1-bits is non-decreasing.
    
  We call an iteration in the first phase successful if either the
  current search point has at least $5n/6$ 1-bits, or if mutation
  increases the number of 1-bits. The probability of a successful
  iteration is at least
  \begin{align*}
    p \geq \frac{n}{6}\left(1-\frac{1}{n}\right)^{n-1}\frac{1}{n}\geq \frac{1}{6e}.
  \end{align*}
  The number of successful iterations in phase 1 dominates
  stochastically a binomially distributed random variable with
  parameters $t$ and $p$. By a Chernoff bound, the probability that
  there are less than
  \begin{align*}
    tp(1-1/6) \geq (n/2)(1-1/6) = 5n/12
  \end{align*}
  successes is $e^{-\Omega(n)}$, and otherwise we call phase 1 a failure.

  Assuming no failure in phase 1, the current search point $x$ at the
  start of phase 2 contains $|x|\geq \min(5n/6,9n/20+5n/12)=5n/6$
  1-bits.  We now prove inductively that in each of the following
  $e^{n}$ iterations, the number of 1-bits in the current search point
  is at least $5n/6$. If the current point is infeasible with at least
  $5n/6$ 1-bits, an offspring is accepted only if it is feasible or
  has at least as many 1-bits. In order to satisfy $\guard(x)$ i.e.,
  the solution is feasible, it is necessary to have at most
  $h+1\leq n/2+1$ 1-bits (at most $h$ on the cycle, plus bit
  $\sigma(n)$ if $n$ is odd). Therefore, to satisfy $\guard(x)$, at
  least $n/3-1$ bits must
  be flipped simultaneously. By a union bound, this type of failure
  has probability at most $e^nn^{-\Omega(n)}=n^{-\Omega(n)}$.

  Conditioned on the good starting event and no failures, the running
  time of the algorithm is at least $e^n$. By a union bound, the
  probability of a failure in phase 1 or 2 is at most
  $1-e^{-2t/(n-1)}+e^{-\Omega(n)}+n^{-\Omega(n)}<c$ for some constant
  $c\in(0,1)$. The unconditional expected runtime is therefore at
  least $(1/5)(1-c)e^n=e^{\Omega(n)}$.
\end{proof}

\section{Algorithms, potentials and runtime bounds of all runs}\label{app:runs}

This appendix collects, for every run summarised in \Cref{tab:runs}, the synthesised algorithm, the
potential function and the runtime bound. The pseudocode is a human-readable transcription of the
frozen Lean definitions of \init\ and \step; in every run \query\ returns the first component of the
state. We write $\mathrm{bits}_p(u)\in\{0,1\}^n$ for the per-bit sampler that sets bit $i$ iff
$u_i\le p_i$ (so $\Pr[\mathrm{bits}_p(u)_i=1]=p_i$; a scalar $p$ means $p_i=p$ for all $i$),
$u^{+i}\coloneqq(u_i,u_{i+1},\dots)$ for the coordinates of the same draw from index $i$ on,
$\oplus$ for bitwise XOR, $|x|_1$ for the number of ones, $e_v$ for the $v$-th unit vector,
$\bar x$ for the complement, $[P]\in\{0,1\}$ for the indicator of $P$, and
$\mathrm{clamp}_{[a,b]}(x)\coloneqq\min\{b,\max\{a,x\}\}$ for clamping $x$ to $[a,b]$ (coordinatewise for vectors). Unless stated otherwise,
$f$ is the fitness of the pending query $q$ passed to \step, and a potential $\Phi_z(s)$ is $0$ if and only if
the query of $s$ is fitness-maximal. For staged runs the runtime theorem is
$\expect{T_A}\le 4/(\varepsilon_1 p_1)$ for all $n\ge 3$ and every instance $z$, with the drift
constants $\varepsilon_1$ and $p_1$ (the Good-set mass) given; ``machine-checked'' means the file
compiles with no \texttt{sorry}.

\paragraph{Problem cards.} \onemax: $F_z(x)=\onemax(x\oplus z)$, unique optimum $\bar z$.
\leadingones (card \texttt{lo}): an instance is a pair $(z,\sigma)$ of a string $z$ and a
permutation $\sigma$ of the positions, and $F_{z,\sigma}(x)=\leadingones\bigl(\sigma(x\oplus z)\bigr)$
with $\sigma(x)$ and \leadingones as in \Cref{app:benchmarks}; the
unique optimum is $\bar z$. The second version of the card also supplies the inequality
$e^{-m/(r-1)}\le(1-1/r)^m$ for $r>1$.
\jump$_k$ (card \texttt{jump\_k}): $F_z(x)=\jump_k(x\oplus z)$ with \jump$_k$ as in
\Cref{app:benchmarks}; the unique optimum is $\bar z$. The gap width $k\ge2$ is an opaque constant, so every proof must be uniform in $k$.
\submodmin, integer-valued (card \texttt{submodular\_minimisation\_integer\_valued\_additional\_advice}):
an instance is a submodular function $f:\{0,1\}^n\to\{0,1,\dots,B\}$ with $B=n^c$ for an opaque
class constant $c\ge1$ and $\min_x f(x)=0$, and $F_f(x)=-f(x)$. Besides the definitions, the card
supplies only facts about the integer range (every non-optimal point has $F_f\le-1$, and distinct
values differ by at least $1$) and the advice quoted in \Cref{app:run-submod}; it supplies no
lemmas about submodular functions.
\cycleuncover (card \texttt{laminate}): the objective $\eval$ of \Cref{fig:cycleuncover}, with
$h=\lfloor n/2\rfloor$, optimum value $3h$. \vertexcover, \emph{flat fitness} (cards
\texttt{vertex\_cover}, early \texttt{bandwidth\_vertex\_cover}): $F_z(x)=-|x|_1$ if $x$ is a
cover of $G_z$ and $|x|_1-2n$ otherwise. \vertexcover, \emph{graded fitness} (later
\texttt{bandwidth\_vertex\_cover}, \texttt{wcvc}, \texttt{well\_covered\_vertex\_cover}):
$F_z(x)=-\bigl((n+1)\,\mathrm{unc}_z(x)+|x|_1\bigr)$, where $\mathrm{unc}_z(x)$ is the number of
edges with no chosen endpoint; covers are exactly the points with $F_z\ge-n$. For the bandwidth
cards every edge $uv$ satisfies $|u-v|\le k$, and $k$ is known to the algorithm; the well-covered
cards restrict to graphs whose minimal covers are all minimum. The \texttt{wcvc} card is an anonymised
copy of the well-covered card with all names and derived facts removed.

\subsection{\onemax, open-weight models (staged; incomplete)}\label{app:run-open}

These runs used the vLLM backend, which serves the models itself, instead of the CLI backend used for the other runs (\Cref{sec:implementation}). The formal stages were
generated by Goedel-Prover-V2-32B with 256 samples per attempt at temperature 1, and the informal
stages (0, 9a, 10a) by Gemma~4 26B-A4B, both served with vLLM on 3 NVIDIA L40 GPUs (Sulis) or
4 NVIDIA GH200 GPUs (Isambard-AI). All runs are on \onemax with earlier versions of the staged
pipeline that have the same 28 stages as \Cref{tab:pipeline-stages}. The token usage of the formal
model is not logged by this backend, and the Lean code of these runs is not included.

\Cref{tab:open-runs} lists the 14 runs made with the three most recent of these pipeline versions
(September 2026). No run completed: each ended after 21--46 hours of wall-clock time. Ten runs got
stuck at Stage~4 (\texttt{NotRandomSearch}), three at Stage~1c or~2, and one reached Stage~9d
(\texttt{PotentialFineGrained}). Rows 19--21 of \Cref{tab:runs} are, in order, the most recent
run on each cluster with the most recent pipeline used there, and the run that got furthest.

\begin{table}[tb]
\centering
\caption{Open-weight runs on \onemax with the most recent pipeline versions. ``Last passed'' is the
last stage that was machine-checked.}
\label{tab:open-runs}
\scriptsize
\begin{tabular}{@{}llllrl@{}}
\toprule
Run & Cluster & Started & Pipeline version & Hours & Last passed\\
\midrule
\texttt{generous-dragon-779} & Sulis & 2026-09-21 & \texttt{optimisation} & 23.8 & 3\\
\texttt{driven-blaze-773} & Sulis & 2026-09-18 & \texttt{optimisation} & 23.8 & 1\\
\texttt{denim-paper-771} & Sulis & 2026-09-17 & \texttt{split\_v9} & 21.6 & 3\\
\texttt{super-moon-761} & Sulis & 2026-09-15 & \texttt{split\_v9} & 23.3 & 3\\
\texttt{bumbling-plant-742} & Sulis & 2026-09-13 & \texttt{split\_v9} & 23.2 & 3\\
\texttt{avid-waterfall-734} & Isambard-AI & 2026-09-12 & \texttt{split\_v9} & 23.6 & 3\\
\texttt{mild-dream-724} & Sulis & 2026-09-08 & \texttt{split\_v9} & 23.6 & 3\\
\texttt{absurd-puddle-723} & Isambard-AI & 2026-09-08 & \texttt{split\_v9} & 23.6 & 3\\
\texttt{peach-armadillo-718} & Isambard-AI & 2026-09-06 & \texttt{split\_v9} & 23.7 & 3\\
\texttt{serene-voice-715} & Sulis & 2026-09-06 & \texttt{split\_v9} & 23.8 & 1c\\
\texttt{visionary-snow-713} & Sulis & 2026-09-05 & \texttt{split\_v8} & 20.7 & 3\\
\texttt{pleasant-elevator-712} & Isambard-AI & 2026-09-04 & \texttt{split\_v8} & 23.7 & 1c\\
\texttt{golden-yogurt-711} & Isambard-AI & 2026-09-02 & \texttt{split\_v8} & 36.8 & 9c\\
\texttt{royal-snowflake-710} & Sulis & 2026-09-01 & \texttt{split\_v8} & 46.2 & 3\\
\bottomrule
\end{tabular}
\end{table}

All samples of the runs in rows 19--21 are logged, so the output tokens of Goedel-Prover-V2-32B
can be counted with its own tokeniser over the logged responses; the informal stages add at most
$7\times10^4$ tokens per run. Lean lines are counted as in \Cref{tab:runs}, at the last passed stage.

\emph{Row 19}. The frozen design samples
each bit with its own bias, initialised to $1/2$; after an improvement, every bias moves by $0.1$
towards the corresponding bit of the query, clamped to the interval $[0.1,0.9]$. This is a PBIL-like
estimation-of-distribution algorithm \citep{pbil1994}, but with too large margins. Stages 0--3 passed. Stage~4 is true for this design (the biases leave $1/2$ after
one improving step), but all $13\times256$ samples for it failed: 38\% contained no extractable
Lean code, 35\% left unsolved goals, 10\% did not parse and 6\% used unknown identifiers. The
pipeline backtracked to Stages 3, 2 and 1c, but never to the design stages. 5{,}896 samples,
25.1M output tokens, 23.8\,h.

\emph{Row 20}. Both designs, the
first and the one proposed after a backtrack to Stage~0, are PBIL-like: per-bit biases, learning
rate $1/n$, clamped to $[0.05,0.95]$ and then $[0.1,0.9]$. The update moves the biases towards the
newly drawn query rather than the evaluated one, so the fitness only decides whether they move.
Stage~4 is again true, and all $24\times256$ samples for it failed: 53\% left unsolved goals, 17\%
used unknown identifiers, 11\% did not parse and 8\% contained no extractable code. The run
generated 10{,}776 samples and 73.1M output tokens, and lasted 23.6 hours.

\emph{Row 21}. The algorithm is uniform random search with a
stored current search point. Stage~4 passed (2 of 256 samples), because in this pipeline version the
\texttt{NotRandomSearch} obligation did not require the witness state to be reachable: the
accepted proof uses a state with all biases equal to $1$, which the algorithm never reaches. The
final version of the pipeline requires the state witness to be reachable. The run went on to define
the potential $\Phi(x)=1-\onemax(x\oplus z)/n$ and passed Stage~9c three times, but all $15\times256$ samples for Stage~9d failed, although the
statement is true for this potential: 31\% omitted the required lemma, 25\% left unsolved goals,
11\% did not parse, 10\% used unknown identifiers, and 16\% recorded no Lean error. The run generated 22{,}584
samples and 124.3M output tokens, and lasted 36.8 hours.

\subsection{\onemax, Claude Opus 5 (staged; complete)}\label{app:run-onemax}

\begin{algorithm}[H]
\caption{$(1{+}1)$-style hill-climber with rate $1/(n+1)$ (\onemax, Claude Opus 5).}
\label{alg:app-onemax}
\begin{algorithmic}[1]
\Procedure{\init}{$u$}
  \State \Return $(\mathrm{bits}_{1/2}(u),\ \textsc{false},\ 0^n,\ 0)$
\EndProcedure
\Procedure{\step}{$u,\ (q,b,x^\ast,f^\ast),\ f$}
  \If{$b$ \textbf{and} $f<f^\ast$} \State $(c,c_f)\gets(x^\ast,f^\ast)$
  \Else \State $(c,c_f)\gets(q,f)$ \Comment{accept (always at the first step)}
  \EndIf
  \State $p_i\gets n/(n+1)$ if $c_i=1$, else $1/(n+1)$, for all $i$ \Comment{flip each bit w.p.\ $\tfrac1{n+1}$}
  \State \Return $(\mathrm{bits}_p(u),\ \textsc{true},\ c,\ c_f)$
\EndProcedure
\end{algorithmic}
\end{algorithm}

\noindent\emph{Potential.} With $M(s)=\max\{F_z(q),F_z(x^\ast)\}$ if $b$, else $M(s)=F_z(q)$,
\[
  \Phi_z(s)=[q\ne\bar z]\cdot\Bigl(\tfrac{1}{n+1}+\tfrac{n}{n+1}\bigl(1-\tfrac{M(s)}{n}\bigr)\Bigr).
\]
\noindent\emph{Runtime.} $\varepsilon_1=\frac1{n+1}$, $p_1=\frac{2}{(n+1)^2}$: $\expect{T_A}\le 2(n+1)^3=O(n^3)$.

\subsection{\leadingones, Claude Opus 5.5 (staged; complete)}\label{app:run-lo}

\begin{algorithm}[H]
\caption{Fitness-dependent mutation around an elitist reference point (\leadingones, Claude Opus 5.5).}
\label{alg:app-lo}
\begin{algorithmic}[1]
\Procedure{\init}{$u$}
  \State \Return $(\mathrm{bits}_{1/2}(u),\ \textsc{false},\ 0^n,\ 0)$
\EndProcedure
\Procedure{\step}{$u,\ (q,b,x^\ast,f^\ast),\ f$}
  \If{$b$ \textbf{and} $f<f^\ast$} \State $(c,c_f)\gets(x^\ast,f^\ast)$
  \Else \State $(c,c_f)\gets(q,f)$ \Comment{accept (always at the first step)}
  \EndIf
  \State $r\gets 1/(\max\{c_f,0\}+2)$ \Comment{rate read from the learned fitness}
  \State \Return $(c\oplus\mathrm{bits}_r(u),\ \textsc{true},\ c,\ c_f)$
\EndProcedure
\end{algorithmic}
\end{algorithm}

\noindent\emph{Potential.} With $M(s)=\max\{F(q),F(x^\ast)\}$ if $b$, else $M(s)=F(q)$,
\[
  \Phi(s)=[q\ne\bar z]\cdot\frac{n+1-M(s)}{n+1}.
\]
\noindent\emph{Runtime.} $\varepsilon_1=\frac1{n+1}$, $p_1=\frac1{(n+2)^2}$: $\expect{T_A}\le 4(n+1)(n+2)^2=O(n^3)$.

\subsection{\leadingones with the $e^{-1}$ lemma, Claude Opus 5.5 (staged; complete)}\label{app:run-lo2}

\init, \step\ and the potential are identical to \Cref{alg:app-lo}. The mask is
drawn with the per-bit sampler, using the same fitness-dependent rate for every bit.

\noindent\emph{Runtime.} $\varepsilon_1=\frac1{n+1}$, $p_1=\frac{e^{-1}}{n+2}$: $\expect{T_A}\le 4e(n+1)(n+2)=O(n^2)$.

\subsection{\jump$_k$, Claude Opus 5.5 (staged; complete)}\label{app:run-jump}

\begin{algorithm}[H]
\caption{$(1{+}1)$-style hill-climber with rate $1/(n+1)$ (\jump$_k$, Claude Opus 5.5).}
\label{alg:app-jump}
\begin{algorithmic}[1]
\Procedure{\init}{$u$}
  \State \Return $(\mathrm{bits}_{1/2}(u),\ \textsc{false},\ 0^n,\ 0)$
\EndProcedure
\Procedure{\step}{$u,\ (q,b,x^\ast,f^\ast),\ f$}
  \If{$b$ \textbf{and} $f<f^\ast$} \State $(c,c_f)\gets(x^\ast,f^\ast)$
  \Else \State $(c,c_f)\gets(q,f)$ \Comment{accept (always at the first step)}
  \EndIf
  \State \Return $(c\oplus\mathrm{bits}_{1/(n+1)}(u),\ \textsc{true},\ c,\ c_f)$
\EndProcedure
\end{algorithmic}
\end{algorithm}

\noindent\emph{Potential.} With $N=n+k$ and $M(s)=\max\{F_z(q),f^\ast\}$ if $b$, else $M(s)=F_z(q)$,
\[
  \Phi_z(s)=\begin{cases}0&\text{if } F_z(q)=F_z(\bar z),\\[2pt]
  \dfrac{N+1-\min\{\max\{M(s),0\},N\}}{N+1}&\text{otherwise.}\end{cases}
\]
\noindent\emph{Runtime.} $\varepsilon_1=\frac1{n+k+1}$, $p_1=(n+1)^{-(k+1)}$: $\expect{T_A}\le 4(n+k+1)(n+1)^{k+1}=O(n^{k+2})$.

\subsection{\jump$_k$ with the $e^{-1}$ lemma, Claude Opus 5.5 (staged, 3 generations; complete)}\label{app:run-jump-exp}

We supplied the $e^{-1}$ lemma and ran the pipeline for three
generations. The runtimes are $O(n^{k+1)}$ in all generations, but
they differ in the leading constant.  The agent used the same
potential as in \Cref{app:run-jump}. All three generations of the
algorithm keep a strictly elitist search point $x^\ast$ (a new query
is accepted only if $f>f^\ast$, or at the first step) and differ only
in the mask rate.

\emph{Generation 1.} The mask rate is reset to $1/(n+2)$ on a strict
improvement, multiplied by $1+1/n$ otherwise, and clamped to the
interval $[1/(n+2),2/(n+4)]$.

\begin{algorithm}[H]
\caption{Self-adjusting mask rate (\jump$_k$ with the $e^{-1}$ lemma, generation 1).}
\label{alg:app-jump-exp-1}
\begin{algorithmic}[1]
\Procedure{\init}{$u$}
  \State \Return $(\mathrm{bits}_{1/2}(u),\ \textsc{false},\ 0^n,\ 0,\ \frac1{n+2})$
\EndProcedure
\Procedure{\step}{$u,\ (q,b,x^\ast,f^\ast,r),\ f$}
  \If{$b$ \textbf{and} $f\le f^\ast$}
    \State $(c,c_f)\gets(x^\ast,f^\ast)$; \ $r'\gets\min\{\frac2{n+4},\ r(1+\frac1n)\}$
  \Else
    \State $(c,c_f)\gets(q,f)$; \ $r'\gets\frac1{n+2}$ \Comment{strict improvement or first step}
  \EndIf
  \State $\rho\gets\mathrm{clamp}_{[\frac1{n+2},\frac2{n+4}]}(r')$
  \State \Return $(c\oplus\mathrm{bits}_\rho(u),\ \textsc{true},\ c,\ c_f,\ r')$
\EndProcedure
\end{algorithmic}
\end{algorithm}

\noindent\emph{Runtime.} $\varepsilon_1=\frac1{n+k+1}$, $p_1=(n+2)^{-k}e^{-2}$:
$\expect{T_A}\le4e^2(n+k+1)(n+2)^k=O(n^{k+1})$.

\emph{Generation 2.} The mask rate is $1/(n+2)$ after a strict improvement and $2/(n+2)$ after a
failure.

\begin{algorithm}[H]
\caption{Failure-doubled mask rate (\jump$_k$ with the $e^{-1}$ lemma, generation 2).}
\label{alg:app-jump-exp-2}
\begin{algorithmic}[1]
\Procedure{\init}{$u$}
  \State \Return $(\mathrm{bits}_{1/2}(u),\ \textsc{false},\ 0^n,\ 0)$
\EndProcedure
\Procedure{\step}{$u,\ (q,b,x^\ast,f^\ast),\ f$}
  \If{$b$ \textbf{and} $f\le f^\ast$}
    \State $(c,c_f)\gets(x^\ast,f^\ast)$; \ $\rho\gets\frac2{n+2}$
  \Else
    \State $(c,c_f)\gets(q,f)$; \ $\rho\gets\frac1{n+2}$ \Comment{strict improvement or first step}
  \EndIf
  \State \Return $(c\oplus\mathrm{bits}_\rho(u),\ \textsc{true},\ c,\ c_f)$
\EndProcedure
\end{algorithmic}
\end{algorithm}

\noindent\emph{Runtime.} $\varepsilon_2=\frac1{2(n+k+1)}$, $p_2=\bigl(\frac2{n+2}\bigr)^k e^{-2}$:
$\expect{T_A}\le 2^{3-k}e^2(n+k+1)(n+2)^k$.

\emph{Generation 3.} A phase bit is reset on a strict improvement and toggled on every failure; the
mask rate is $3/(n+2)$ if the updated phase is set and $2/(n+2)$ otherwise.

\begin{algorithm}[H]
\caption{Rate cycling with a learned phase bit (\jump$_k$ with the $e^{-1}$ lemma, generation 3).}
\label{alg:app-jump-exp-3}
\begin{algorithmic}[1]
\Procedure{\init}{$u$}
  \State \Return $(\mathrm{bits}_{1/2}(u),\ \textsc{false},\ 0^n,\ 0,\ \textsc{false})$
\EndProcedure
\Procedure{\step}{$u,\ (q,b,x^\ast,f^\ast,\phi),\ f$}
  \If{$b$ \textbf{and} $f\le f^\ast$}
    \State $(c,c_f)\gets(x^\ast,f^\ast)$; \ $\phi'\gets\neg\phi$
  \Else
    \State $(c,c_f)\gets(q,f)$; \ $\phi'\gets\textsc{false}$ \Comment{strict improvement or first step}
  \EndIf
  \State $\rho\gets\frac3{n+2}$ if $\phi'$, else $\rho\gets\frac2{n+2}$
  \State \Return $(c\oplus\mathrm{bits}_\rho(u),\ \textsc{true},\ c,\ c_f,\ \phi')$
\EndProcedure
\end{algorithmic}
\end{algorithm}

\noindent\emph{Runtime.} $\varepsilon_3=\frac1{n+k+1}$,
$p_3=\min\bigl\{(\frac2{n+2})^k e^{-2},\ (\frac3{n+2})^k e^{-n/((n+2)/3-1)}\bigr\}$:
$\expect{T_A}\le4(n+k+1)/p_3=O(n^{k+1})$.

\subsection{\cycleuncover, Claude Fable 5.1 (staged; complete)}\label{app:run-laminate}

Write $H=2\lfloor n/2\rfloor$ and $\rho=1/(n+1)$. The state $(q,A,B,r,a,k)$ holds the query, the
indicator strings $A,B$ of the vertices labelled with the two colours so far, the frontier (last
labelled vertex) $r$, its colour $a$, and the number $k$ of labelled vertices.

\begin{algorithm}[H]
\caption{Cycle walk (\cycleuncover, Claude Fable 5.1).}\label{alg:app-laminate}
\begin{algorithmic}[1]
\Procedure{\init}{$u$}
  \State \Return $(\mathrm{bits}_\rho(u),\ 0^n,\ 0^n,\ 0^n,\ \textsc{false},\ 0)$
\EndProcedure
\Procedure{\step}{$u,\ (q,A,B,r,a,k),\ f$}
  \State $\pi\gets A$ if $k=H$, else $r$; \quad $T\gets\pi\oplus q$ \Comment{the mask just used}
  \If{$f=2$ \textbf{and} $0<k<H$ \textbf{and} $|T|_1=1$ \textbf{and} $T\wedge(A\oplus B)=0^n$}
    \If{$a$} \State $B\gets B\oplus T$ \Else \State $A\gets A\oplus T$ \EndIf
    \State $r\gets T$; \ $a\gets\neg a$; \ $k\gets k+1$ \Comment{$T$ is adjacent to the frontier}
  \ElsIf{$f=H+1$ \textbf{and} $k=0$ \textbf{and} $|T|_1=1$}
    \State $A\gets T$; \ $r\gets T$; \ $a\gets\textsc{true}$; \ $k\gets1$ \Comment{$T$ is a read coordinate}
  \EndIf
  \State $\pi'\gets A$ if $k=H$, else $r$
  \State \Return $(\pi'\oplus\mathrm{bits}_\rho(u),\ A,\ B,\ r,\ a,\ k)$
\EndProcedure
\end{algorithmic}
\end{algorithm}

\noindent\emph{Potential.} With $k^+(s)$ the label count after one \step\ from $s$ with the true fitness and truncated
subtraction $\dot-$: $\Phi_z(s)=0$ if $F_\sigma(q)=3h$, else $\Phi_z(s)=\bigl(1+(H\mathbin{\dot-}k^+(s))\bigr)/(H+2)$.
\noindent\emph{Runtime.} $\varepsilon_1=\frac1{H+2}$, $p_1=\frac1{3(n+1)}$: $\expect{T_A}\le12(n+1)(2\lfloor n/2\rfloor+2)=O(n^2)$.

\subsection{\cycleuncover, Claude Fable 5.1 (zero-shot ablation; complete)}\label{app:run-laminate-zs}

This algorithm was obtained in an ablation run which did not include
the staged pipeline. The algorithm samples a bitstring uniformly at
random in each iteration, i.e., it is essentially identical to the
baseline algorithm provided.

\begin{algorithm}[H]
\caption{Random search with a recorded incumbent (\cycleuncover, zero-shot, Claude Fable 5.1).}
\label{alg:app-laminate-zs}
\begin{algorithmic}[1]
\Procedure{\init}{$z$}
  \State \Return $(\mathrm{bits}_{1/2}(z),\ \textsc{false},\ 0^n,\ 0)$
\EndProcedure
\Procedure{\step}{$z,\ (x',b,x,f_x),\ f$}
  \If{\textbf{not} $b$ \textbf{or} $f\ge f_x$} \State $(x,f_x)\gets(x',f)$ \EndIf
  \State \Return $(\mathrm{bits}_{1/2}(z),\ \textsc{true},\ x,\ f_x)$ \Comment{the query ignores the state}
\EndProcedure
\end{algorithmic}
\end{algorithm}

\noindent\emph{Potential.} No potential is constructed because the agent does not use the staged pipeline.
\emph{Runtime.} $\expect{T_A}\le2^n/2$.

\subsection{\cycleuncover, Claude Sonnet (staged; incomplete)}\label{app:run-laminate-sonnet}
Claude Sonnet did not complete the pipeline with problem card \cycleuncover. The following is a description of the algorithm
constructed in generation 4 of 5. The algorithm is (1+1) EA, except that the current search point
is accepted only if the mutant is strictly better than the parent.

\begin{algorithm}[H]
\caption{Design 4 (\cycleuncover, Claude Sonnet).}\label{alg:app-laminate-sonnet}
\begin{algorithmic}[1]
\Procedure{\init}{$z$}
  \State \Return $(\mathrm{bits}_{1/2}(z),\ 0^n,\ 2h)$ \Comment{$0^n$ has fitness $2h$}
\EndProcedure
\Procedure{\step}{$z,\ (x',x,f_x),\ f$}
  \If{$f>f_x$} \State $(x,f_x)\gets(x',f)$ \EndIf \Comment{Design 5: also accept ties if $f_x>2h$}
  \State $p_i\gets(n-1)/n$ if $x_i=1$, else $1/n$, for all $i$
  \State \Return $(\mathrm{bits}_p(z),\ x,\ f_x)$
\EndProcedure
\end{algorithmic}
\end{algorithm}

\noindent\emph{Potential.} $\Phi_z(s)=\bigl(3h-F_\sigma(q)\bigr)/(3h)$.
\noindent\emph{Runtime.} None proved (stopped at Stage 10h); $\varepsilon_1=1/(3h)$, and $p_1=1/(en)$.

\subsection{\vertexcover, Claude Fable 5.1 (staged; complete)}\label{app:run-vc}

Let $N=n^2$, $\rho=1/(n+1)$, $u_j=\lfloor j/n\rfloor\bmod n$, and $v_j=j\bmod n$. Let $\mathrm{probe}(j)$ be
$1^n$ with bits $u_j,v_j$ cleared. Let $\mathrm{opt}(A)$ be a minimum-size string covering every edge
recorded in the table $A$, and $\mathrm{target}(k,A)=\mathrm{probe}(k)$ if $k<N$, else $\mathrm{opt}(A)$.

\begin{algorithm}[H]
\caption{Edge probing, then solve (\vertexcover, Claude Fable 5.1).}\label{alg:app-vc}
\begin{algorithmic}[1]
\Procedure{\init}{$u$}
  \State \Return $(\mathrm{probe}(0)\oplus\mathrm{bits}_\rho(u),\ 0,\ A_\emptyset)$ \Comment{empty edge table}
\EndProcedure
\Procedure{\step}{$u,\ (q,k,A),\ f$}
  \If{$q=\mathrm{target}(k,A)$} \Comment{hit: the probe was queried exactly}
    \State $A[u_k,v_k]\gets[f<-n]$ \Comment{probe is a non-cover iff $u_kv_k$ is an edge}
    \If{$k<N$} \State $k\gets k+1$ \EndIf
  \EndIf
  \State \Return $(\mathrm{target}(k,A)\oplus\mathrm{bits}_\rho(u),\ k,\ A)$
\EndProcedure
\end{algorithmic}
\end{algorithm}

\noindent\emph{Potential.} For non-optimal states, the potential is $\Phi_z(s)=\frac12+U(s)/\bigl(2(2N+1)\bigr)$ where
$U(s)=2\bigl(N-\min(k,N)\bigr)+[q\ne\mathrm{target}(k,A)]$.
\noindent\emph{Runtime.} $\varepsilon_1=\frac1{2(2n^2+1)}$, $p_1=\frac1{n+1}$: $\expect{T_A}\le8(2n^2+1)(n+1)=O(n^3)$.
The bound counts fitness evaluations only; $\mathrm{opt}(A)$ is defined non-constructively (as some
minimum cover of the learned graph), so the computation between queries is not bounded.

\subsection{\bandwidthvc, Claude Fable 5.1 (staged, flat fitness; complete)}\label{app:run-bvc}

Let $K=k+1$, $L=2nK$, $a_p=\lfloor p/K\rfloor$, $b_p=a_p+1+(p\bmod K)$, let $\mathrm{probe}(p)$ be
$1^n$ with bits $a_p,b_p$ set to 0. Note that $\mathrm{probe}(p)$ is a vertex cover if and only if $a_pb_p$ is not an edge in the graph. Let $\mathrm{opt}(A)$ be as in \Cref{app:run-vc}, $\mathrm{plan}(t,A)=\mathrm{probe}(t/2)$
if $t$ is even and $t<L$, else $\mathrm{opt}(A)$, and $\beta(y)_i=n/(n+1)$ if $y_i=1$, else $1/(n+1)$.

\begin{algorithm}[H]
\caption{Interleaved window probing and solving (\bandwidthvc, Claude Fable 5.1).}\label{alg:app-bvc}
\begin{algorithmic}[1]
\Procedure{\init}{$u$}
  \State \Return $(\mathrm{bits}_{\beta(\mathrm{plan}(0,A_\emptyset))}(u),\ 0,\ A_\emptyset)$
\EndProcedure
\Procedure{\step}{$u,\ (q,t,A),\ f$}
  \If{$q=\mathrm{plan}(t,A)$ \textbf{and} $t<L$}
    \If{$t$ even} \State $A[a_{t/2},b_{t/2}]\gets[f<-n]$ \EndIf
    \State $t\gets t+1$
  \EndIf
  \State \Return $(\mathrm{bits}_{\beta(\mathrm{plan}(t,A))}(u),\ t,\ A)$
\EndProcedure
\end{algorithmic}
\end{algorithm}

\noindent\emph{Potential.} Off the optimum, $\Phi_z(s)=\bigl(2(L\mathbin{\dot-}t)+[q\ne\mathrm{plan}(t,A)]+1\bigr)/(2L+2)$.
\noindent\emph{Runtime.} $\varepsilon_1=\frac1{2L+2}$, $p_1=\frac14$: $\expect{T_A}\le16(2L+2)=64n(k+1)+32=O(kn)$.
The bound counts fitness evaluations only; $\mathrm{opt}(A)$ is defined non-constructively (as some
minimum cover of the learned graph), so the computation between queries is not bounded.

\subsection{\bandwidthvc, Claude Fable 5.1 (staged, graded fitness; incomplete)}\label{app:run-bvc2}

Let $\kappa=\mathrm{clamp}_{[\frac1{2(n+4)},\frac2{n+4}]}$.

\begin{algorithm}[H]
\caption{Direction-adaptive asymmetric mutation (\bandwidthvc, Claude Fable 5.1).}\label{alg:app-bvc2}
\begin{algorithmic}[1]
\Procedure{\init}{$u$}
  \State \Return $(\mathrm{bits}_{1/2}(u),\ \textsc{false},\ 0^n,\ 0,\ \frac1{n+4},\ \frac1{n+4})$
\EndProcedure
\Procedure{\step}{$u,\ (q,b,a,f_a,\rho_{\text{down}},\rho_{\text{up}}),\ f$}
  \If{\textbf{not} $b$} \State $(a,f_a)\gets(q,f)$
  \ElsIf{$f\ge f_a$}
    \If{$|q|_1<|a|_1$} \State $\rho_{\text{down}}\gets\kappa(2\rho_{\text{down}})$; \ $\rho_{\text{up}}\gets\kappa(\rho_{\text{up}}/2)$
    \Else \State $\rho_{\text{down}}\gets\kappa(\rho_{\text{down}}/2)$; \ $\rho_{\text{up}}\gets\kappa(2\rho_{\text{up}})$
    \EndIf
    \State $(a,f_a)\gets(q,f)$
  \EndIf
  \State $p_i\gets\kappa(\rho_{\text{down}})$ if $a_i=1$, else $\kappa(\rho_{\text{up}})$, for all $i$
  \State \Return $(a\oplus\mathrm{bits}_p(u),\ \textsc{true},\ a,\ f_a,\ \rho_{\text{down}},\ \rho_{\text{up}})$
\EndProcedure
\end{algorithmic}
\end{algorithm}

\noindent\emph{Potential.} With $e(s)=q$ if $\neg b$ or $F_z(q)\ge F_z(a)$, else $a$: off the optimum
$\Phi_z(s)=\frac12+\frac{-F_z(e(s))}{4n}$ if $e(s)$ is a cover and $\Phi_z(s)=\frac34+\frac{\mathrm{unc}_z(e(s))}{4(kn+1)}$ otherwise.
\noindent\emph{Runtime.} None proved (stopped at Stage 10h); $\varepsilon_1=\frac1{4((k+1)n+1)}$, $p_1=\frac1{16(n+4)}$.

\subsection{\bandwidthvc, Claude Fable 5.1 (zero-shot ablation; incomplete)}\label{app:run-bvc-zs}

\begin{algorithm}[H]
\caption{$(1{+}1)$ EA with rate $1/n$ (\bandwidthvc, zero-shot, Claude Fable 5.1).}\label{alg:app-bvc-zs}
\begin{algorithmic}[1]
\Procedure{\init}{$u$}
  \State \Return $(\mathrm{bits}_{1/2}(u),\ \textsc{false},\ 0^n,\ 0)$
\EndProcedure
\Procedure{\step}{$u,\ (q,b,x^\ast,f^\ast),\ f$}
  \If{$b$ \textbf{and} $f<f^\ast$} \State $(c,c_f)\gets(x^\ast,f^\ast)$ \Else \State $(c,c_f)\gets(q,f)$ \EndIf
  \State \Return $(c\oplus\mathrm{bits}_{1/n}(u),\ \textsc{true},\ c,\ c_f)$
\EndProcedure
\end{algorithmic}
\end{algorithm}

\noindent\emph{Potential.} None. \emph{Runtime.} The bound $\expect{T_A}\le2^n/n$ was left as \texttt{sorry}.

\subsection{\bandwidthvc, Claude Fable 5.1, second session (staged; incomplete)}\label{app:run-bvc-2nd}

\begin{algorithm}[H]
\caption{Fill-then-prune mutation (\bandwidthvc, Claude Fable 5.1).}\label{alg:app-bvc-2nd}
\begin{algorithmic}[1]
\Procedure{\init}{$u$}
  \State \Return $(\mathrm{bits}_{1/2}(u),\ \textsc{false},\ 0^n,\ 0)$
\EndProcedure
\Procedure{\step}{$u,\ (q,b,x^\ast,f^\ast),\ f$}
  \If{$b$ \textbf{and} $f<f^\ast$} \State $(c,c_f)\gets(x^\ast,f^\ast)$ \Else \State $(c,c_f)\gets(q,f)$ \EndIf
  \If{$c_f<-n$} \Comment{incumbent is not a cover: fill}
    \State $p_i\gets1/(n+1)^2$ if $c_i=1$, else $1/2$, for all $i$
  \Else \Comment{incumbent is a cover: prune}
    \State $p_i\gets1/(n+1)$ for all $i$
  \EndIf
  \State \Return $(c\oplus\mathrm{bits}_p(u),\ \textsc{true},\ c,\ c_f)$
\EndProcedure
\end{algorithmic}
\end{algorithm}

\noindent\emph{Potential.} Off the optimum, $\Phi_z(s)=\bigl(1+[\neg b]+(-F_z(x^\ast))/((n+1)(kn+1))\bigr)/3$.
\noindent\emph{Runtime.} None proved (stopped at Stage 10c); $\varepsilon_1=\frac1{3(n+1)(kn+1)}$, $p_1=(n+1)^{-n}$.

\subsection{\bandwidthvc, Claude Opus 5.5 (staged; complete)}\label{app:run-bvc-opus}

Let $L=2k+1$ and $\ell=\min\{L,n\}$.

\begin{algorithm}[H]
\caption{Window-repair random search (\bandwidthvc, Claude Opus 5.5, attempt 2).}\label{alg:app-bvc-opus}
\begin{algorithmic}[1]
\Procedure{\init}{$u$}
  \State \Return $(\mathrm{bits}_{1/2}(u),\ \textsc{false},\ 0^n,\ 0)$
\EndProcedure
\Procedure{\step}{$u,\ (q,b,x^\ast,f^\ast),\ f$}
  \If{$b$ \textbf{and} $f<f^\ast$} \State $(c,c_f)\gets(x^\ast,f^\ast)$ \Else \State $(c,c_f)\gets(q,f)$ \EndIf
  \State $y\gets\mathrm{bits}_{1/2}(u)$; \ $m\gets\mathrm{bits}_{1/3}(u^{+n})$
  \If{$y_i=1$ for all $i<\ell$} \Comment{an optimum never chooses all of $0,\dots,2k$}
    \State $y_i\gets c_i\oplus m_i$ for all $i<\ell$ \Comment{redraw the window near the incumbent}
  \EndIf
  \State \Return $(y,\ \textsc{true},\ c,\ c_f)$
\EndProcedure
\end{algorithmic}
\end{algorithm}

\noindent\emph{Potential.} With $F^{\mathrm{opt}}_z=\max_yF_z(y)$, $e(s)=q$ if $\neg b$ or $F_z(q)\ge F_z(x^\ast)$, else $x^\ast$,
$\eta=1/(4\cdot3^{L})$ and $R=(n+1)kn+n$: off the optimum
$\Phi_z(s)=(1-\eta)+\eta\min\bigl\{1,(F^{\mathrm{opt}}_z-F_z(e(s)))/(R+1)\bigr\}$.
\noindent\emph{Runtime.} $\varepsilon_1=1-\eta$, $p_1=2^{-n}(1+3^{-L})$: $\expect{T_A}\le 4\cdot2^n/\bigl((1-a/4)(1+a)\bigr)=\Theta(2^n)$
with $a=3^{-(2k+1)}$.

The algorithm is close to random search with expected runtime bound
$O(2^n)$.  The algorithm maintains a currently best search point $x^*$
with corresponding fitness $f^*$. In each iteration, the algorithm
samples uniformly at random a new search point $y$. If all the first
$\ell$ bits of $y$ are 1-bits, then each of these bits is copied from
the current search point $x^*$, and then mutated with probability
$1/3$.

\subsection{\submodmin, integer-valued, Claude Opus 5.5 (staged; complete)}\label{app:run-submod}

The problem card contains the following advice (verbatim apart from a corrected typo): ``Note that
this is a challenging problem domain where naive algorithmic designs, such as elitist hill-climbing
with a single individual, are unlikely to succeed. Current efforts in designing efficient evolutionary
algorithms for this problem are ongoing, so your design must be innovative and well-justified. You
need to carefully consider the properties of submodular functions and how to exploit them in your
algorithm design. Note that the minimisation [of] sub-modular functions is a well-studied problem in
combinatorial optimisation which allows for polynomial time algorithms.''

The state $(q,b,\lambda,k,g,f_{\mathrm{prev}},t,\phi)$ holds the query $q$, the initialisation flag
$b$, a point $\lambda\in[0,1]^n$, the position $k$ in the current chain, the accumulated subgradient
$g\in\R^n$, the previous chain value $f_{\mathrm{prev}}$, the intended query $t$ and a flag $\phi$
recording that an optimum has been seen. $S_k(\lambda)$ is the set of the $k$ coordinates with the
largest values of $\lambda$ (ties broken by index), $\eta=1/(nB^2)$ and $m=\mathrm{bits}_{1/(n+1)}(u)$.

\begin{algorithm}[H]
\caption{Projected subgradient descent on the Lov\'asz extension (\submodmin, Claude Opus 5.5).}
\label{alg:app-submod}
\begin{algorithmic}[1]
\Procedure{\init}{$u$}
  \State \Return $(S_0\oplus m,\ \textsc{false},\ (\tfrac12)^n,\ 0,\ 0^n,\ 0,\ S_0,\ \textsc{false})$
\EndProcedure
\Procedure{\step}{$u,\ (q,b,\lambda,k,g,f_{\mathrm{prev}},t,\phi),\ f$}
  \If{\textbf{not} $b$} \State $t\gets S_0(\lambda)$; $k\gets0$ \Comment{start the first chain}
  \ElsIf{$\phi$} \State no update \Comment{$t$ is the recorded optimum; the next query is again $t\oplus m$}
  \ElsIf{$f\ge0$} \State $t\gets q$; $\phi\gets\textsc{true}$ \Comment{an optimum was queried}
  \ElsIf{$q=t$} \Comment{the intended level set was queried exactly}
    \State $g_i\gets f_{\mathrm{prev}}-f$ for the coordinate $i$ added at position $k$ \Comment{marginal value}
    \If{$k<n$} \State $k\gets k+1$; $f_{\mathrm{prev}}\gets f$; $t\gets S_k(\lambda)$
    \Else \State $\lambda\gets\mathrm{clamp}_{[0,1]}(\lambda-\eta g)$; $k\gets0$; $g\gets0^n$; $t\gets S_0(\lambda)$ \Comment{subgradient step}
    \EndIf
  \EndIf \Comment{otherwise the query was perturbed: repeat $t$}
  \State \Return $(t\oplus m,\ \textsc{true},\ \lambda,\ k,\ g,\ f_{\mathrm{prev}},\ t,\ \phi)$
\EndProcedure
\end{algorithmic}
\end{algorithm}

\noindent\emph{Potential.} With $y^*$ a fixed minimiser of $f$, $D(s)=\|\lambda-\mathbf 1_{y^*}\|^2$ and $c=\eta/(2n+4)$:
$\Phi(s)=0$ if $F_f(q)\ge0$; otherwise $\Phi(s)=c/(n+1)$ if $\phi$, and
$\Phi(s)=\bigl(\min\{D(s),n\}+c\,(2(n+1-k)-[q=t])\bigr)/(n+1)$ if not ($c(2n+3)$ replaces the
second term before initialisation).
\noindent\emph{Runtime.} Generation 1: $\varepsilon_1=1/\bigl(nB^2(2n+4)(n+1)\bigr)$, $p_1=\frac13$:
$\expect{T_A}\le12\,nB^2(2n+4)(n+1)=O(n^{3+2c})$. Generations 2--5 give $16\,n^2B^2(n+2)$,
$32\,nB^2(n+1)$, $\frac{128}{15}B^2(n+1)^2$ and $\frac{512}{63}B^2(n+1)^2=O(n^{2+2c})$.

\subsection{\wellcoveredvc, anonymised card, Claude Fable 5.1 (staged; incomplete)}\label{app:run-wcvc-fable}

Let $\kappa(x)=\mathrm{clamp}_{[\frac1{4n},\frac1{2n}]}(x)$ (applied only when sampling).

\begin{algorithm}[H]
\caption{Polarity-adaptive asymmetric mutation (\texttt{wcvc}, Claude Fable 5.1).}\label{alg:app-wcvc-fable}
\begin{algorithmic}[1]
\Procedure{\init}{$u$}
  \State \Return $(\mathrm{bits}_{1/2}(u),\ \textsc{false},\ 0^n,\ 0,\ \frac1{2n},\ \frac1{2n})$
\EndProcedure
\Procedure{\step}{$u,\ (q,b,x^\ast,f^\ast,r_{\text{up}},r_{\text{down}}),\ f$}
  \If{\textbf{not} $b$} \State $(c,c_f)\gets(q,f)$
  \ElsIf{$f\ge f^\ast$}
    \If{$|q|_1>|x^\ast|_1$} \State $(r_{\text{up}},r_{\text{down}})\gets(2r_{\text{up}},r_{\text{down}}/2)$
    \ElsIf{$|q|_1<|x^\ast|_1$} \State $(r_{\text{up}},r_{\text{down}})\gets(r_{\text{up}}/2,2r_{\text{down}})$
    \EndIf
    \State $(c,c_f)\gets(q,f)$
  \Else \State $(c,c_f)\gets(x^\ast,f^\ast)$
  \EndIf
  \State $p_i\gets\kappa(r_{\text{down}})$ if $c_i=1$, else $\kappa(r_{\text{up}})$, for all $i$
  \State \Return $(c\oplus\mathrm{bits}_p(u),\ \textsc{true},\ c,\ c_f,\ r_{\text{up}},\ r_{\text{down}})$
\EndProcedure
\end{algorithmic}
\end{algorithm}

\noindent\emph{Potential.} With $F^{\mathrm{eff}}_z(s)=\max\{F_z(x^\ast),F_z(q)\}$ if $b$, else $F_z(q)$, and
$R=(n+1)(n^2+1)+1$: off the optimum $\Phi_z(s)=\min\{1,(1-F^{\mathrm{eff}}_z(s))/R\}$.
\noindent\emph{Runtime.} None proved (stopped at Stage 10h); $\varepsilon_1=1/R$, $p_1=\frac1{8n}$.

\subsection{\wellcoveredvc, anonymised card, Claude Opus 5.5 (staged; complete)}\label{app:run-wcvc-opus}

\begin{algorithm}[H]
\caption{Regime-switching asymmetric mutation (\texttt{wcvc}, Claude Opus 5.5).}\label{alg:app-wcvc-opus}
\begin{algorithmic}[1]
\Procedure{\init}{$u$}
  \State \Return $(\mathrm{bits}_{1/2}(u),\ \textsc{false},\ 0^n,\ 0)$
\EndProcedure
\Procedure{\step}{$u,\ (q,b,x^\ast,f^\ast),\ f$}
  \If{$b$ \textbf{and} $f<f^\ast$} \State $(c,c_f)\gets(x^\ast,f^\ast)$ \Else \State $(c,c_f)\gets(q,f)$ \EndIf
  \If{$c_f\ge-n$} \Comment{incumbent is a cover: shrink}
    \State $p_i\gets\frac1{n+1}$ if $c_i=1$, else $\frac1{(n+1)^2}$, for all $i$
  \Else \Comment{uncovered edges remain: grow}
    \State $p_i\gets\frac1{(n+1)^2}$ if $c_i=1$, else $\frac1{n+1}$, for all $i$
  \EndIf
  \State \Return $(c\oplus\mathrm{bits}_p(u),\ \textsc{true},\ c,\ c_f)$
\EndProcedure
\end{algorithmic}
\end{algorithm}

\noindent\emph{Potential.} With $c(s)=q$ if $\neg b$ or $F_z(q)\ge f^\ast$, else $x^\ast$, and $N=(n+1)\binom{n+1}{2}+n+2$:
off the optimum $\Phi_z(s)=(1-F_z(c(s)))/N$.
\noindent\emph{Runtime.} $\varepsilon_1=\frac1N$, $p_1=\frac1{n+1}\bigl(\frac n{n+1}\bigr)^{n-1}$:
$\expect{T_A}\le4N(n+1)\bigl(\frac{n+1}n\bigr)^{n-1}=O(n^4)$.

\subsection{\wellcoveredvc, named card, Claude Fable 5.1 (staged; complete)}\label{app:run-wc}

\begin{algorithm}[H]
\caption{Fitness-directed asymmetric mutation (\wellcoveredvc, Claude Fable 5.1).}\label{alg:app-wc}
\begin{algorithmic}[1]
\Procedure{\init}{$u$}
  \State $q\gets\mathrm{bits}_{1/2}(u)$; \ \Return $(q,\ \textsc{false},\ q,\ 0,\ \textsc{true})$
\EndProcedure
\Procedure{\step}{$u,\ (q,b,x^\ast,f^\ast,\gamma),\ f$}
  \If{$b$ \textbf{and} $f<f^\ast$} \State $(c,c_f)\gets(x^\ast,f^\ast)$ \Else \State $(c,c_f)\gets(q,f)$ \EndIf
  \State $\gamma'\gets[c_f<-n]$ \Comment{grow while the incumbent is not a cover}
  \If{$\gamma'$} \State $p_i\gets\frac1{2(n+1)}$ if $c_i=1$, else $\frac1{n+1}$, for all $i$
  \Else \State $p_i\gets\frac1{n+1}$ if $c_i=1$, else $\frac1{2(n+1)}$, for all $i$
  \EndIf
  \State \Return $(c\oplus\mathrm{bits}_p(u),\ \textsc{true},\ c,\ c_f,\ \gamma')$
\EndProcedure
\end{algorithmic}
\end{algorithm}

\noindent\emph{Potential.} Off the optimum, $\Phi_z(s)=\bigl(1+\min\{\ell_z(q),\ell_z(x^\ast)\}\bigr)/(n^2+n+2)$, where
$\ell_z(x)=|x|_1-m^\ast_z$ for a cover $x$ and $\ell_z(x)=n+1+\mathrm{unc}_z(x)$ otherwise, with $m^\ast_z$
the minimum cover size.
\noindent\emph{Runtime.} $\varepsilon_1=\frac1{n^2+n+2}$, $p_1=\frac1{(n+1)^2}$: $\expect{T_A}\le4(n^2+n+2)(n+1)^2=O(n^4)$.

\section{Pipeline progress and per-stage statistics}\label{app:stage-stats}

\Cref{fig:completion} shows how each run of \Cref{tab:runs} progressed through the stages of the
pipeline, and \Cref{fig:stage-stats} where the runs spend their effort.

\begin{figure}[H]
  \centering
  \includegraphics[width=0.85\linewidth]{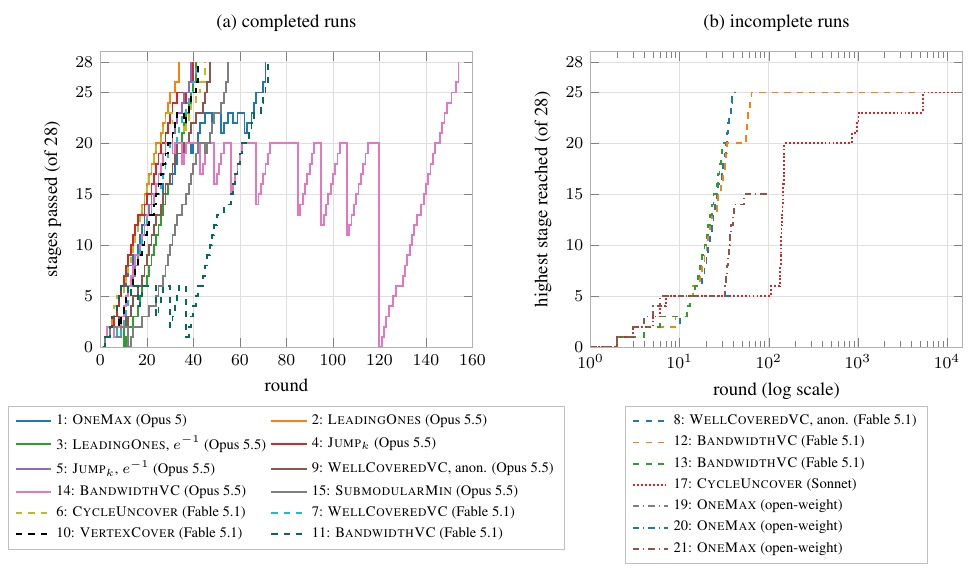}
  \caption{Progress through the 28 pipeline stages for the runs of \Cref{tab:runs}, numbered as in
    the table, one point per round. (a) Runs that completed the pipeline: the number of stages
    passed, including the drops caused by backtracking; the final point at 28 marks completion.
    Row 14 includes a first attempt that stalled at Stage~10c before the pipeline restarted.x1
    (b) Runs that did not complete: the highest stage reached so far, on a logarithmic round axis
    (rows 12 and 17 continue for thousands of automatic rounds after their best level). The
    zero-shot runs (rows 16 and 18) have a single stage and are omitted.}
  \label{fig:completion}
\end{figure}

\begin{figure}[H]
  \centering
  \includegraphics[width=0.85\linewidth]{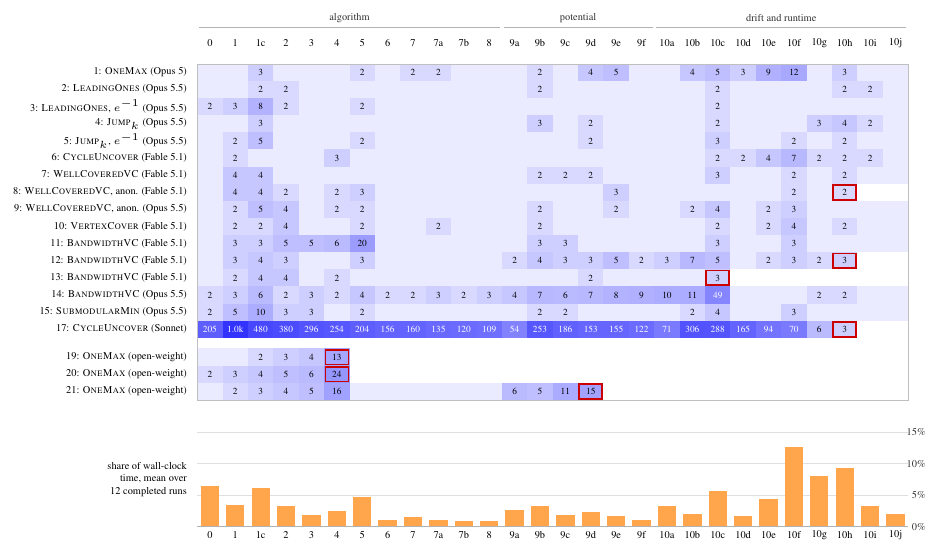}
  \caption{Per-stage statistics of the runs of \Cref{tab:runs}, numbered as in the table. Top: number
    of attempts per stage (colour on a logarithmic scale; the number is shown when a stage needed
    more than one attempt); a red frame marks the stage at which an incomplete run stopped. For
    rows 19--21 (vLLM backend) one attempt is a batch of 256 samples. Bottom: share of wall-clock
    time spent in each stage, averaged over the 12 completed frontier-model runs. Each run is counted
    up to its first completion or, if it did not complete, up to its last round at its highest
    stage. The zero-shot runs (rows 16 and 18) have a single stage and are omitted.}
  \label{fig:stage-stats}
\end{figure}

\Cref{fig:stage-stats} shows where the runs spend their effort. Over the 12 completed
frontier-model runs, the drift and runtime stages (Stages~10a--10j) account for 53\% of the wall-clock
time, algorithm construction (Stages 0--8) for 34\% and the potential (Stages 9a--9f) for 13\%.
The most expensive stages are the pointwise drift and mass proofs: \texttt{PositiveDrift} (10f,
13\%), \texttt{DriftSetMass} (10h, 9\%) and \texttt{NegativeDriftAndJump} (10g, 8\%). Two stages
fail at the first attempt in almost every run for technical rather than mathematical reasons:
\texttt{DriftConstantProperties} (10c) failed first in all 12 completed runs (81 attempts and 13
backtracks in total), mostly on the normal form of the baseline constants, and
\texttt{StepMeasurability} (1c) in 11 of them. The incomplete frontier-model runs stop at a drift
stage: rows 8, 12 and 17 at the mass bound (10h), which is false for their designs, and row 13 at
10c. The open-weight models stop much earlier, at \texttt{NotRandomSearch} (Stage~4, rows 19 and
20) and \texttt{PotentialFineGrained} (Stage~9d, row 21), after 13--24 batches of 256 samples at
that stage. The Sonnet run (row 17) needed between 54 and about 1{,}000 attempts at every stage it
passed before Stage 10g.

\end{document}

%%% Local Variables:
%%% mode: LaTeX
%%% TeX-master: t
%%% End: